
\typeout{IJCAI--23 Instructions for Authors}


\documentclass{article}
\pdfpagewidth=8.5in
\pdfpageheight=11in

\usepackage{ijcai23}

\usepackage{times}
\usepackage{soul}
\usepackage{url}
\usepackage[hidelinks]{hyperref}
\usepackage[utf8]{inputenc}
\usepackage[small]{caption}
\usepackage{graphicx}
\usepackage{amsmath}
\usepackage{amsthm}
\usepackage{booktabs}
\usepackage{algorithm}
\usepackage{algorithmic}
\usepackage[switch]{lineno}

\usepackage{tikz,ifthen}
\usepackage{alltt}
\usepackage{paralist}
\usepackage{nicefrac}
\usepackage{xspace}
\usepackage{svg}
\usepackage{marvosym}
\usepackage{wasysym}
\usepackage{verbatim}
\usepackage{subfigure}
\usepackage{multirow}
\usepackage[capitalize]{cleveref}
\usepackage[per-mode=symbol,detect-all]{siunitx}
\usepackage{graphics}
\usepackage{amssymb}
\usepackage{wrapfig}
\usepackage{enumitem}
\usepackage{placeins}
\usepackage{caption}
\usepackage{xcolor}
\usepackage{url}
\usepackage{makecell}
\usepackage{chngcntr}
  
\usetikzlibrary{arrows,trees,backgrounds,automata,shapes,decorations,plotmarks,fit,calc,positioning,shadows,chains}

\definecolor{color0}{RGB}{239, 48, 84} 
\definecolor{color4}{RGB}{62, 255, 139} 
\definecolor{color6}{RGB}{147, 129, 255} 
\definecolor{color7}{RGB}{147, 129, 255} 

\definecolor{nnedgecolor}{RGB}{90,90,90}
\tikzstyle{every pin edge}=[<-,shorten <=1pt]
\tikzstyle{every path}=[draw=color7!50]
\tikzstyle{neuron}=[circle,fill=black!25,minimum size=17pt,inner sep=0pt]
\tikzstyle{input neuron}=[neuron, fill=color4]
\tikzstyle{output neuron}=[neuron, fill=color0]
\tikzstyle{hidden neuron}=[neuron, fill=color6]
\tikzstyle{annot} = [text width=4em, text centered]
\tikzstyle{nnedge} = [-{stealth},shorten >=0.1cm, shorten <=0.05cm,line 
width=0.8pt,nnedgecolor]
\usetikzlibrary{calc}

\newtheorem{definition}{\textbf{Definition}}


\urlstyle{same}



\newtheorem{theorem}{Theorem}




\pdfinfo{
/TemplateVersion (IJCAI.2023.0)
}

\title{The \#DNN-Verification Problem:\\ Counting Unsafe Inputs for Deep Neural Networks}


\author{
Luca Marzari*,
Davide Corsi*,
Ferdinando Cicalese\And
Alessandro Farinelli
\affiliations
Department of Computer Science, University of Verona, Verona, Italy
\emails
luca.marzari@univr.it, davide.corsi@univr.it
}

\begin{document}

\maketitle
\let\svthefootnote\thefootnote
	\let\thefootnote\relax\footnotetext{* Luca Marzari and Davide Corsi contributed equally.}
	\let\thefootnote\svthefootnote
	\addtocounter{footnote}{0}
\begin{abstract}
    Deep Neural Networks are increasingly adopted in critical tasks that require a high level of safety, e.g., autonomous driving.
    While state-of-the-art verifiers can be employed to check whether a DNN is unsafe w.r.t. some given property (i.e., whether there is at least one unsafe input configuration), their yes/no output is not informative enough for other purposes, such as shielding, model selection, or training improvements.
    In this paper, we introduce the \textit{\#DNN-Verification} problem, which involves counting the number of input configurations of a DNN that result in a violation of a particular safety property. We analyze the complexity of this problem and propose a novel approach that returns the exact count of violations. Due to the \#P-completeness of the problem, we also propose a randomized, approximate method that provides a provable probabilistic bound of the correct count while significantly reducing computational requirements. 
    We present experimental results on a set of safety-critical benchmarks that demonstrate the effectiveness of our approximate method and evaluate the tightness of the bound.
\end{abstract}

\section{Introduction}
In recent years, the success of Deep Neural Networks (DNNs) in a wide variety of fields (e.g., games playing, speech recognition, and image recognition) has led to the adoption of these systems also in safety-critical contexts, such as autonomous driving and robotics, where humans safety and expensive hardware can be involved. A popular and successful example is ``ACAS Xu", an airborne collision avoidance system for aircraft based on a DNN controller, which is now a standard DNN-verification benchmark. 

A DNN is a non-linear function that maps a point in the input space to an output that typically represents an action or a class (respectively, for control and classification tasks). A crucial aspect of these DNNs lies in the concept of generalization. A neural network is trained on a finite subset of the input space, and at the end of this process, we expect it to find a pattern that allows making decisions in contexts never seen before. However, even networks that empirically perform well on a large test set can react incorrectly to slight perturbations in their inputs \cite{adversarial}. While this is a negligible concern in simulated and controlled environments, it can potentially lead to catastrophic consequences for real safety-critical tasks, possibly endangering human health. Consequently, in recent years, part of the scientific communities devoted to machine learning and formal methods have joined efforts to develop DNN-Verification techniques that provide formal guarantees on the behavior of these systems \cite{Liu}\cite{ReluplexJournal}.

Given a DNN and a safety property, a DNN verification tool should ideally either ensure that the property is satisfied for all the possible input configurations or identify a specific example (e.g., adversarial configuration) that violates the requirements. 
Given these complex functions' non-linear and non-convex nature, verifying even simple properties is proved to be an NP-complete problem \cite{Reluplex}. In literature, several works try to solve the problem efficiently either by \textit{satisfiability modulo theories} (SMT) solvers \cite{Liu}\cite{Marabou} or by \textit{interval propagation} methods \cite{BetaCrown}. 

Although these methods show promising results, the current formulation, widely adopted for almost all the approaches, considers only the decision version of the formal verification problem, with the solution being a binary answer whose possible values are typically denoted \texttt{SAT} or \texttt{UNSAT}. \texttt{SAT} indicates that the verification framework found a specific input configuration, as a counterexample, that caused a violation of the requirements. \texttt{UNSAT}, in contrast, indicates that no such point exists, and then the safety property is formally verified in the whole input space. While an \texttt{UNSAT} answer does not require further investigations, a \texttt{SAT} result hides additional information and questions. For example, how many of such adversarial configurations exist in the input space? How likely are these misbehaviors to happen during a standard execution? Can we estimate the probability of running into one of these points?
These questions can be better dealt with in terms of the problem of counting {\em the number of violations} to a safety property, a problem that might be important also in other contexts:
 (i)\;\textit{model selection}: a counting result allows ranking a set of models to select the safest one. This model selection is impossible with a \texttt{SAT} or \texttt{UNSAT} type verifier, which does not provide any information to discriminate between two models which have both been found to violate the safety condition for at least one input configuration.
(ii)\;\textit{guide the training}: knowing the total count of violations for a particular safety property can help guiding the training of a deep neural network in a more informed fashion, for instance, minimizing this number over the training process.
(iii)\;\textit{estimating the probability of error}: the ratio of the total number of violations over the size of the input space provides an estimate of the probability of committing an unsafe action given a specific safety property.

Furthermore, by enumerating the violation points, it is possible to perform additional safety-oriented operations, such as:
(iv)\;\textit{shielding}: given the set of input configurations that violate a given safety property, we could adopt a more informative shielding mechanism that prevents unsafe actions.
(v)\;\textit{enrich the training phase}: if we can enumerate the violation configurations, we could add these configurations to the training (or to a memory buffer in deep reinforcement learning setup) to improve the training phase in a safe-oriented fashion. 
Motivated by the above questions and applications, previous works \cite{Baluta}\cite{zhang2021bdd4bnn}\cite{ghosh2021justicia} propose a \textit{quantitative} analysis of neural networks, focusing on a specific subcategory of these functions, i.e., Binarized Neural Networks (BNN). However, violation points are generally not preserved in the binarization of a DNN to a BNN nor conversely in the relaxation of a BNN to a DNN \cite{zhang2021bdd4bnn}. 

To this end, in this paper, we introduce the \textit{\#DNN-Verification} problem, which is the extension of the decision DNN-Verification problem to the corresponding counting version. Given a general deep neural network (with continuous values) and a safety property, the objective is to count the exact number of input configurations that violate the specific requirement. 
We analyze the complexity of this problem and propose two solution approaches. In particular, in the first part of the paper, we propose an algorithm that is guaranteed to find the {\em exact} number of unsafe input configurations, providing a detailed theoretical analysis of the algorithm's complexity. The high-level intuition behind our method is to recursively shrink the domain of the property, exploiting the \texttt{SAT} or \texttt{UNSAT} answer of a standard formal verifier to drive the expansion of a tree that tracks the generated subdomains. Interestingly, our method can rely on any formal verifier for the decision problem, taking advantage of all the improvements to state-of-the-art and, possibly, to novel frameworks.
As the analysis shows, our algorithm requires multiple invocations of the verification tool, resulting in significant overhead and becoming quickly unfeasible for real-world problems. For this reason, inspired by the work of \cite{SampleCount} on \textit{\#SAT}, we propose an approximation algorithm for \textit{\#DNN-Verification}, providing provable (probabilistic) bounds on the correctness of the estimation. 

In more detail, in this paper, we make the following contribution to the state-of-the-art:
\begin{itemize}
    \item We propose an exact count formal algorithm to solve \textit{\#DNN-verification}, that exploits state-of-the-art decision tools as backends for the computation.
    \item We present \texttt{CountingProVe}, a novel approximation algorithm for \textit{\#DNN-verification}, that provides a bounded confidence interval for the results.
    \item We evaluate our algorithms on a standard benchmark, ACAS Xu, showing that \texttt{CountingProVe} is scalable and effective also for real-world problems.
\end{itemize}
To the best of our knowledge, this is the first study to present \textit{\#DNN-verification}, the counting version of the decision problem of the formal verification for general neural networks without converting the DNN into a CNF.

\section{Preliminaries}\label{preliminaries}

\label{DNN}
Deep Neural Networks (DNNs) are processing systems that include a collection of connected units called neurons, which are organized into one or more layers of parameterized non-linear transformations. Given an input vector, the value of each next hidden node in the network is determined by computing a linear combination of node values from the previous layer and applying a non-linear function node-wise (i.e., the activation function). Hence, by propagating the initial input values through the subsequent layers of a DNN, we obtain either a label prediction (e.g., for image classification tasks) or a value representing the index of an action (e.g., for a decision-making task).
 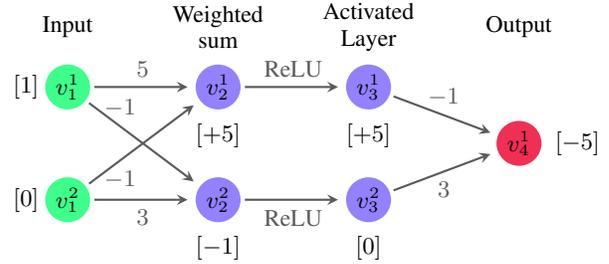
\begin{figure}[t]
	\begin{center}
		\scalebox{1.0} {
			\def\layersep{2.0cm}
			\begin{tikzpicture}[shorten >=1pt,->,draw=black!50, node
				distance=\layersep,font=\footnotesize]
				
				\node[input neuron] (I-1) at (0,-1) {$v^1_1$};
				\node[input neuron] (I-2) at (0,-2.5) {$v^2_1$};

				\node[left=-0.05cm of I-1] (b1) {$[1]$};
				\node[left=-0.05cm of I-2] (b2) {$[0]$};
				
				\node[hidden neuron] (H-1) at (\layersep,-1) {$v^1_2$};
				\node[hidden neuron] (H-2) at (\layersep,-2.5) {$v^2_2$};
				
				\node[hidden neuron] (H-3) at (2*\layersep,-1) {$v^1_3$};
				\node[hidden neuron] (H-4) at (2*\layersep,-2.5) {$v^2_3$};
				
				\node[output neuron] at (3*\layersep, -1.75) (O-1) {$v^1_4$};
				
				\draw[nnedge] (I-1) --node[above] {$5$} (H-1);
				\draw[nnedge] (I-1) --node[above, pos=0.3] {$-1$} (H-2);
				\draw[nnedge] (I-2) --node[below, pos=0.3] {$-1$} (H-1);
				\draw[nnedge] (I-2) --node[below] {$3$} (H-2);
				
				\draw[nnedge] (H-1) --node[above] {ReLU} (H-3);
				\draw[nnedge] (H-2) --node[below] {ReLU} (H-4);
				
				\draw[nnedge] (H-3) --node[above] {$-1$} (O-1);
				\draw[nnedge] (H-4) --node[below] {$3$} (O-1);

				\node[below=0.05cm of H-1] (b1) {$[+5]$};
				\node[below=0.05cm of H-2] (b2) {$[-1]$};

				\node[below=0.05cm of H-3] (b1) {$[+5]$};
				\node[below=0.05cm of H-4] (b2) {$[0]$};

                    \node[right=0.05cm of O-1] (b1) {$[-5]$};

				\node[annot,above of=H-1, node distance=0.8cm] (hl1) {Weighted
					sum};
				\node[annot,above of=H-3, node distance=0.8cm] (hl2) {Activated Layer };
				\node[annot,left of=hl1] {Input };
				\node[annot,right of=hl2] {Output };
			\end{tikzpicture}
		}
		\captionof{figure}{A simple example of a DNN $\mathcal{N}$ that will be used as a running example throughout the paper.}
		\label{fig:toyDnn}
	\end{center}
\end{figure}
Fig.~\ref{fig:toyDnn} shows a concrete example of how a DNN computes the output. Given an input vector $V_1=[1, 0]^T$, the weighted sum layer computes the value $V_2 = [+5, -1]^T$. This latter is the input for the so-called activation function \textit{Rectified Linear Unit} (ReLU), which computes the following transformation, $y = ReLU(x) = \max(0,x)$. Hence, the result of this activated layer is the vector $V_3=[+5, 0]^T$. Finally, the network's single output is again computed as a weighted sum, giving us the value $-5$.

\subsection{The DNN-Verification Problem}

An instance of the \textit{DNN-Verification} problem (in its standard decision form) is given by a trained DNN $\mathcal{N}$ together with a safety property, typically expressed as an input-output relationship for $\mathcal{N}$\;\cite{Liu}. In more detail, a property is a tuple that consists of a precondition, expressed by a predicate $\mathcal{P}$ on the input,  and a postcondition, expressed by a predicate $\mathcal{Q}$ on the output of $\mathcal{N}$. In particular, $\mathcal{P}$ defines the possible input values we are interested in---aka the input configurations---(i.e., the domain of the property), while $\mathcal{Q}$ represents the output results we aim to guarantee formally for at least one of the inputs that satisfy $\mathcal{P}.$ 
Then, the problem consists of verifying whether there exists an input configuration in $\mathcal{P}$ that, when fed to the DNN $\mathcal{N}$, produces an output satisfying $\mathcal{Q}$.  

\begin{definition}[\textit{DNN-Verification Problem}]
\label{def:decision_problem}
\phantom{a}

    {\bf Input}: A tuple $\mathcal{R}=\langle\mathcal{N}, \mathcal{P}, \mathcal{Q}\rangle$, where $\mathcal{N}$ is a trained DNN, $\mathcal{P}$ is precondition on the input, and $\mathcal{Q}$ a postcondition on the output.

    {\bf Output}: $\texttt{SAT}$ if 
    $\exists\;x\;|\;\mathcal{P}(x) \wedge \mathcal{Q}(\mathcal{N}(x))$ and $\texttt{UNSAT}$ otherwise,
    indicating that no such $x$ exists.
\end{definition}

As an example of how this problem can be employed for checking the existence of unsafe input configurations for a DNN,  
suppose we aim to verify that the DNN $\mathcal{N}$ of Fig.~\ref{fig:toyDnn}, for any input in the interval $[0,1]$, outputs a value greater than or equal 0. Hence, we define 
$\mathcal{P}$ as the predicate  on the input vector ${\bf v} = (v^1_1, v^1_2)$ which is true  iff  ${\bf v} \in [0,1]\times [0,1]$, and $\mathcal{Q}$ as the predicate on the output $v_4^1$ which is 
true iff $v_4^1 = \mathcal{N}(v^1_1, v^1_2) < 0,$
that is we set
$\mathcal{Q}$ to be the negation of our desired property.
Then, solving the \textit{DNN-Verification Problem} on the instance $(\mathcal{N}, \mathcal{P}, \mathcal{Q})$ we get 
$\texttt{SAT}$ iff there is counterexample that violates our property.

Since for the input vector ${\bf v} = (1, 0)$ (also reported in Fig.~\ref{fig:toyDnn}), the output of $\mathcal{N}$ is $<0$, 
in the example, the result of the \textit{DNN-Verification Problem} (with the postcondition being the negated of the desired property) is $\texttt{SAT}$, meaning that there exists at least a single input configuration $(v^1_1, v^1_2)$ that satisfies $\mathcal{P}$ and for which $\mathcal{N}(v^1_1, v^1_2) < 0$. 
As a result, we can say that the network is not safe for the desired property.\footnote{ More details about the problem and state-of-the-art methods for solving it can be found in the supplementary material.}
\section{\#DNN-Verification and Exact Count}
\label{exactCount}

In this section, we first provide a formal definition for \textit{\#DNN-Verification}; hence, we propose an algorithm to solve the problem by exploiting any existing formal verification tool.  Finally, we proceed with a theoretical discussion on the complexity of the problem, highlighting the need for approximate approaches. 

\subsection{Problem Formulation}

Given a tuple $\mathcal{R}=\langle\mathcal{N}, \mathcal{P}, \mathcal{Q}\rangle$, as in Definition~\ref{def:decision_problem}, 
we let $\Gamma(\mathcal{R})$ denote the set of  \textit{all} the input configurations for  $\mathcal{N}$ satisfying the property defined by $\mathcal{P}$ and $\mathcal{Q},$
i.e.
\[ \Gamma(\mathcal{R}) = \Bigg\{ x \; \big\vert \; \mathcal{P}(x) \wedge \mathcal{Q}(\mathcal{N}(x)) \Bigg\} \]
Then, the 
\textit{\#DNN-Verification} consists of computing the cardinality of
$\Gamma(\mathcal{R}).$

\begin{definition}[\textit{\#DNN-Verification Problem}]
\phantom{a}

{\bf Input}: A tuple $\mathcal{R}=\langle\mathcal{N}, \mathcal{P}, \mathcal{Q}\rangle$, as in Definition~\ref{def:decision_problem}. 

{\bf Output}: 
$\vert \Gamma(\mathcal{R})\vert$
\end{definition}


For the purposes discussed in  the introduction, rather than the cardinality of $\Gamma(\mathcal{R})$, it is more useful to define the problem in terms of the 
ratio between the cardinality of $\Gamma$ and the cardinality of the set of inputs satisfying $\mathcal{P}.$ 
%
We refer to this ratio as the \textit{violation rate (VR)}, and study the result of the \#DNN-Verification problem in terms of this equivalent measure. 

\begin{definition}[\textit{Violation Rate (VR)}]\label{violation}
    Given an instance of the \textit{DNN-Verification}  problem $\mathcal{R}=\langle\mathcal{N}, \mathcal{P}, \mathcal{Q}\rangle$
    we define the violation rate as 
    \[ VR = \frac{\vert\Gamma(\mathcal{R})\vert}{\lvert\{ x\;|\;\mathcal{P}(x) \}\lvert} \]
\end{definition}

Although, in general, DNNs can handle continuous spaces, in the following sections (and for the analysis of the algorithms), without loss of generality, we assume the input space to be discrete. We remark that for all practical purposes, this is not a limitation since we can assume that the discretization is made to the maximum resolution achievable with the number of decimal digits a machine can represent. It is crucial to point out that discretization is not a requirement of the approaches proposed in this work. In fact, supposing to have a backend that can deal with continuous values, our solutions would not require such discretization.

\subsection{Exact Count Algorithm for \#DNN-Verification}
We now present an algorithm to solve the exact count of \textit{\#DNN-Verification}. 

The algorithm recursively splits the input space into two parts of equal size as long as it contains both a point that violates the property (i.e.,  $(\mathcal{N}, \mathcal{P}, \mathcal{Q})$ is a $\texttt{SAT}$-instance for \textit{DNN-Verification} problem) and a point that satisfies it (i.e.,  $(\mathcal{N}, \mathcal{P}, \neg\mathcal{Q}$) is a $\texttt{SAT}$-instance for \textit{DNN-Verification} problem).\footnote{Any state-of-the-art sound and complete verifier for the decision problem can be used to solve these instances. In fact, our method works with any state-of-the-art verifiers, although, using a verifier that is not complete can lead to over-approximation in the final count.}  
The leaves of the recursion tree of this procedure correspond to a partition of the input space into parts 
where the violation rate is either 0 or 1. Therefore, 
the overall violation rate is easily computable by
summing up the sizes of the subinput spaces in the leaves of violation rate 1.
\begin{figure}[t]
    \centering
    \includegraphics[width=\linewidth]{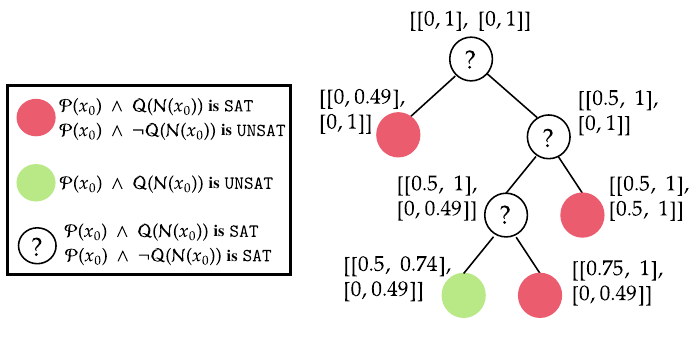}
    \caption{Example execution of exact count for a particular $\mathcal{N}$ and safety property (assuming a discretization factor of $0.01$).}
    \label{fig:exactCount}
\end{figure}
Fig.~\ref{fig:exactCount} shows an example of the execution of our algorithm for the DNN and the safety property presented in Sec.~\ref{preliminaries}. We want to enumerate the total number of input configurations ${\bf v} = (v_1^1, v_1^2)$ where both $v_1^1$ and $v_1^2$ satisfy $\mathcal{P}$ (i.e., they lie in the interval $[0,1]$) and such that the output $v^1_4 = \mathcal{N}({\bf v})$ is a value strictly less than 0, i.e., ${\bf v}$ violates the safety property. 

The algorithm starts with the whole input space and checks if at least one point exists that outputs a value strictly less than zero. 
The exact count method checks the predicate $\exists x \mid \mathcal{P}(x) \wedge \mathcal{Q}(\mathcal{N}(x))$ with a verification tool for the decision problem. If the result is $\texttt{UNSAT}$, then the property holds in the whole input space, and the algorithm returns a VR of $0\%$. Otherwise, if the verification tool returns $\texttt{SAT}$, at least one single violation point exists, but we cannot assert how many similar configurations exist in the input space. To this end, the algorithm checks another property equivalent to the original one, thus negating the postcondition (i.e., $\neg \mathcal{Q} = v_1^4 \geq 0$). 
Here, we have two possible outcomes:
\begin{itemize}
    \item $\mathcal{P}(x_0) \wedge \neg\mathcal{Q}(\mathcal{N}(x_0))$ is $\texttt{UNSAT}$ implies that all possible $x_0 = (v_1^1, v_1^2)$ that satisfy $\mathcal{P}$, output a value strictly less than 0, violating the safety property. Hence, the algorithm returns a 100\% of \textit{VR} in the input area represented by $\mathcal{P}$. This situation is depicted with the red circle in Fig. \ref{fig:exactCount}.
    \item $\mathcal{P}(x_0) \wedge \neg\mathcal{Q}(\mathcal{N}(x_0))$ is $\texttt{SAT}$ implies that there is at least one  input configuration ${\bf v} = (v_1^1, v_1^2)$ satisfying $\mathcal{P}$ and such that $\mathcal{N}({\bf v}) \geq 0.$ Therefore, the algorithm cannot assert anything about the violated portion of the area since there are some input points on which $\mathcal{N}$ generates outputs greater or equal to zero and some others that generate a result strictly less than zero. Hence, the procedure splits the input space into two parts, as depicted in Fig.~\ref{fig:exactCount}. 
\end{itemize}

This process is repeated until the algorithm reaches one of the base cases, such as a situation in which either $\mathcal{P}(x) \wedge \mathcal{Q}(\mathcal{N}(x))$ is not satisfiable (i.e., all the current portion of the input space is safe), represented by a green circle in Fig.\ref{fig:exactCount}), or $\mathcal{P}(x) \wedge \neg\mathcal{Q}(\mathcal{N}(x))$ is not satisfiable 
(i.e., the whole current portion of the input space is unsafe), represented by a red circle. Note that the option of obtaining $\texttt{UNSAT}$ on both properties is not possible.

Finally, the algorithm of exact count returns the \textit{VR} as the ratio between the unsafe areas (i.e., the red circles) and the original input area. In the example of Fig.~\ref{fig:exactCount}, assuming a 2-decimal-digit discretization, we obtain a total number of violation points equal to $8951$. Normalizing this value by the initial total number of points ($10201$), and considering the percentage value, we obtain a final \textit{VR} of $87.7 \%$.
Moreover, the \textit{Violation Rate} can also be interpreted as the probability of violating a safety property using a particular DNN $\mathcal{N}$. In more detail, uniformly sampling $10$ random input vectors for our DNN $\mathcal{N}$ in the intervals described by $\mathcal{P}$, $8$ over $10$ with high probability violates the safety property.

\subsection{Hardness of \#DNN-Verification} \label{complexityExactCount}

It is known that finding even just one input configuration (also referred to as violation point) that violates a given safety property is computationally hard since the \textit{DNN-Verification} problem is \textit{NP-Complete} \cite{Reluplex}. Hence, counting and enumerating all the violation points is expected to be an even harder problem. 
\begin{theorem}\label{th:p-complete}
    The \textit{\#DNN-Verification} is \textit{\#P-Complete}.
\end{theorem}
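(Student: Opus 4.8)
The plan is to establish the two inclusions separately: first that \textit{\#DNN-Verification} lies in \textit{\#P}, and then that it is \textit{\#P}-hard via a parsimonious (count-preserving) reduction from \textit{\#3-SAT}, the canonical \textit{\#P}-complete counting problem. Membership gives the upper bound, and the count-preserving reduction gives the matching lower bound, so together they yield completeness.

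For membership in \textit{\#P}, I would exhibit a nondeterministic polynomial-time machine whose number of accepting computations on input $\mathcal{R} = \langle \mathcal{N}, \mathcal{P}, \mathcal{Q}\rangle$ equals $|\Gamma(\mathcal{R})|$. The machine nondeterministically guesses a candidate input configuration $x$; since the input space is discretized to a fixed machine precision, each $x$ has a polynomial-size bit representation, so the guess is polynomial in $|\mathcal{R}|$. The machine then accepts iff $\mathcal{P}(x) \wedge \mathcal{Q}(\mathcal{N}(x))$ holds. Checking $\mathcal{P}(x)$ and $\mathcal{Q}(\cdot)$ is polynomial because the pre/postconditions are (conjunctions of) linear constraints, and a forward pass computing $\mathcal{N}(x)$ runs in time polynomial in the size of $\mathcal{N}$. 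Hence the number of accepting paths is exactly $|\{x : \mathcal{P}(x)\wedge\mathcal{Q}(\mathcal{N}(x))\}| = |\Gamma(\mathcal{R})|$, placing the problem in \textit{\#P}.

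For the hardness direction, the idea is to build, from a 3-CNF formula $\varphi$ over variables $x_1,\dots,x_n$ with clauses $c_1,\dots,c_m$, an instance $\mathcal{R}_\varphi$ whose violation count equals the number of satisfying assignments of $\varphi$. I would set $\mathcal{P}$ to force each input coordinate $v_i \in \{0,1\}$, so that the feasible inputs are in bijection with Boolean assignments. A literal $x_i$ (resp.\ $\lnot x_i$) is encoded by the affine value $v_i$ (resp.\ $1 - v_i$), and for each clause $c_j$ the network computes $y_j = \relu\big(1 - \ell_{j,1} - \ell_{j,2} - \ell_{j,3}\big)$, where $\ell_{j,1},\ell_{j,2},\ell_{j,3}$ are the literal values of $c_j$. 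On binary inputs the inner sum is an integer, so $y_j = 0$ exactly when the clause is satisfied (sum $\ge 1$) and $y_j = 1$ when it is violated. A final linear layer outputs $Y = \sum_{j=1}^m y_j$, and we set $\mathcal{Q}(Y) \equiv [\,Y < 1\,]$. Because every $y_j \ge 0$, we have $Y < 1 \iff Y = 0 \iff$ all clauses are satisfied. This network has $O(n+m)$ neurons and constant depth, so $\mathcal{R}_\varphi$ is constructible in polynomial time, mirroring the structure of the known \textit{NP}-hardness construction for the decision problem.

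The crux of the argument---and the step I expect to require the most care---is verifying that the reduction is parsimonious rather than merely solution-preserving in the \textit{NP} sense. Concretely, I must argue that the map sending each satisfying Boolean assignment to the corresponding point of $\{0,1\}^n$ is a bijection onto $\Gamma(\mathcal{R}_\varphi)$: restricting inputs to binary values via $\mathcal{P}$ rules out the spurious real-valued inputs that would otherwise let several distinct inputs collapse to one assignment, guaranteeing that $|\Gamma(\mathcal{R}_\varphi)|$ equals the number of satisfying assignments of $\varphi$. Since counting satisfying assignments of 3-CNF formulas is \textit{\#P}-complete, this count-preserving reduction yields \textit{\#P}-hardness, and together with the membership argument it establishes that \textit{\#DNN-Verification} is \textit{\#P}-complete. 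A secondary technical point worth checking is that the discretization assumption is compatible with encoding $\{0,1\}$ exactly, which it is, since $0$ and $1$ are representable at any resolution.
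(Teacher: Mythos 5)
Your proof is correct and takes essentially the same route as the paper: a parsimonious reduction from \#3-SAT in which satisfying assignments are placed in bijection with violation points of a ReLU network built from clause gadgets---your $y_j = \relu\big(1 - \ell_{j,1} - \ell_{j,2} - \ell_{j,3}\big)$ with test $Y < 1$ is just the complemented form of the paper's disjunction/conjunction gadgets with test ``output $= n$''---together with the restriction to $\{0,1\}$-valued inputs that rules out spurious fractional solutions and makes the correspondence count-preserving. The only substantive addition is your explicit \#P-membership argument (guess $x$, run a forward pass, check $\mathcal{P}$ and $\mathcal{Q}$ in polynomial time), which the paper leaves implicit; the rest matches the paper's proof.
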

Note that, in general, the counting version of an \textit{NP-complete} problem is not necessarily \#P-complete. We are able to prove Theorem \ref{th:p-complete} by showing that the hardness proof for \textit{DNN-Verification} in \cite{Reluplex} provides a bijection between an assignment that satisfies a \textit{3-CNF} formula and a violation point in given input space (we include the details is in the supplementary material). Hence, \textit{\#DNN-Verification} turns out to be \textit{\#P-Complete} as \textit{\#3-SAT} \cite{valiant1979counting}.

\section{CountingProVe for Approximate Count}
\label{countingProve}

In view of the \#P-completeness of the \#DNN-Verification problem, it is not surprising that the time complexity of the algorithm for the exact count worsens very fast when the size of the instance increases. In fact also moderately large networks are intractable with this approach.   
To overcome these limitations while still providing guarantees on the quality of the results, we propose a randomized-approximation algorithm, \texttt{CountingProVe} (presented in Algorithm~\ref{alg:CountingProVe}). 

To understand the intuition behind our approach, consider Fig. \ref{fig:exactCount}. If we could assume that each split distributed the violation points evenly in the two subinstances it produces, for computing the number of violation points in the whole input space it would be enough to compute the number of violation points in the subspace of a single leaf and multiplying it by $2^s$ (which represents the number of leaves). Since we cannot guarantee a perfectly balanced distribution of the violation points among the leaves,
we propose to use a heuristic strategy to split the input domain, balancing the number of violation points in expectation. This strategy 
allows us to estimate the count and a provable confidence interval on the actual violation rate. In more detail, our algorithm receives as input the tuple for a DNN-Verification problem (i.e., $\mathcal{R}=\langle\mathcal{N}, \mathcal{P}, \mathcal{Q}\rangle$) and to obtain a precise estimate of the \textit{VR}, performs $t$ iterations of the following procedure. 

\begin{algorithm}[t]
\caption{\texttt{CountingProVe}}\label{alg:CountingProVe}
\begin{algorithmic}[1]
\small
\STATE \textbf{Input:} $\mathcal{R} =\langle\mathcal{N},\mathcal{P}, \mathcal{Q}\rangle$, $t$ ($\#$ of repetitions), $m$ ($\#$ of violation points sampled per iteration), $\beta > 0$ (error tolerance factor).
\STATE \textbf{Output: } lower bound of the violation rate.
\vspace{0.2cm}

\FOR{t=1 to $t$}
    \STATE  $VR_t$ $\gets 100\%,$ $s \gets 0$ 
    \WHILE{$Timeout(ExactCount(\mathcal{R}))$} 
        \STATE $S \gets SampleViolationPoints(\mathcal{R}, m)$
        \STATE $median \gets ComputeMedian(S, node_i)$
        \STATE $node_{i,0}, node_{i,1} \gets SplitInterval(node_i, median)$
        \STATE $side \gets$ a random value chosen uniformly from \{0,1\}
        \STATE $\mathcal{P} \gets UpdateP(\mathcal{P}, node_{i,side})$
        \STATE $s \gets s + 1$
    \ENDWHILE
    \STATE $VR_t \gets 2^{s - \beta} \cdot$ $ExactCount(\mathcal{R})\cdot \prod_{i=1}^{s} \alpha_{i}$
    \STATE $VR \gets min(VR_t, VR)$
    
\ENDFOR
\STATE \textbf{return} \textit{VR}
\end{algorithmic}
\end{algorithm}

Initially, we assume that, in the whole input domain, the safety property does not hold, i.e., we have a 100\% of \textit{Violation Rate}. The idea is to repeatedly simplify the input space by performing a sequence of multiple splits, where each $i$-th split implied a reduction of the input space of a factor $\alpha_i$. At the end of $s$ simplifications, the goal is to obtain an exact count of unsafe input configurations that can be used to estimate the VR of the entire input space. 
Specifically, Algorithm\;\ref{alg:CountingProVe} presents the pseudo-code for the heuristic approach. 
Before splitting, the algorithm attempts to use the exact count but stops if not completed within a fixed timeout which we set to just a fraction of a second (line 5). 
Inside the loop, the procedure $SampleViolationPoints(\mathcal{R}, m)$ samples $m$ violation points from the subset of the input space described in $\mathcal{P}$ (using a uniform sampling strategy) and saves them in $S$.

After line 6, the algorithm has an estimation of how many violation points there are in the portion of the input space under consideration\footnote{if the $m$ solutions are not found, the algorithm proceeds by considering only the violation points discovered or splitting at random.}. The idea is to simplify one dimension of the hyperrectangle cyclically while keeping the number of violation points balanced in the two portions of the domain we aim to split (i.e., as close to $\vert S \vert/2$ as possible). Specifically,
$ComputeMedian(S, node_i)$ (line 7) computes the median of the values along the dimension of the node chosen for the split. $SplitInterval(node_i, median)$ splits the node into two parts $node_{i,0}$ and $node_{i,1}$ according to the median computed. For instance, suppose we have an interval of $[0,1]$ and the median value is $0.33$. The two parts obtained by calling $SplitInterval(node_i, median)$ are $[0, 0.33]$ and $(0.33, 1]$. The algorithm proceeds randomly, selecting the side to consider from that moment on and discarding the other part. Hence, it updates the input space intervals to be considered for the safety property $\mathcal{P}$ (lines 9-10). Finally, the variable $s$ that represents the number of splits made during the iteration is incremented.
At the end of the while loop (line 12), the VR is computed by multiplying the result of the exact count by $2^{s - \beta}$, and for $\prod_{i=1}^{s} \alpha_i$. The first term ($2^{s - \beta}$) considers the fact that we balanced the \textit{VR} in our tree, hence selecting one path and multiplying by $2^s$, we get a representative estimate of the whole starting input area. $\beta$ is a tolerance error factor, and finally,  $\prod_{i=1}^{s} \alpha_i$ describes how we simplified the input space during the $s$ splits made (line 13). Finally, the algorithm refines the lower bound (line 14).

The crucial aspect of \texttt{CountingProVe} is that even when the splitting method is arbitrarily poor, the bound returned is provably correct (from a probabilistic point of view), see next section for more details. Moreover, the correctness of our algorithm is independent on the number of samples used to compute the median value. However, using a poor heuristic (i.e., too few samples or a suboptimal splitting technique), the bound's quality can change, as the lower bound may get farther away from the actual \textit{Violation Rate}. We report in the supplementary material an ablation study that focuses on the impact of the heuristic on the results obtained.

\subsection{A Provable Lower Bound}\label{lower}
In this section, we show that the randomized-approximation algorithm \texttt{CountingProVe} returns a correct lower bound for the \textit{Violation Rate} with a probability greater (or equal) than $(1 - 2^{-\beta t})$. In more detail, we demonstrate that the error of our approximation decreases exponentially to zero and that it does not depend on the number of violation points sampled during the simplification process nor on the heuristic for the node splitting.

\begin{theorem}\label{th2}
    Given the tuple $\mathcal{R} =\langle\mathcal{N},\mathcal{P}, \mathcal{Q}\rangle$, the \textit{Violation Rate} returned by the randomized-approximation algorithm \texttt{CountingProVe} is a correct lower bound with a probability $ \geq (1 - 2^{-\beta t})$.
\end{theorem}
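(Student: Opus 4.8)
The plan is to reduce the theorem to a single clean probabilistic statement about one iteration of the outer \texttt{for} loop and then amplify it over the $t$ independent iterations. First I would fix notation: let $M = \lvert\{x : \mathcal{P}(x)\}\rvert$ be the size of the original (discretized) input domain, let $N_0 = \lvert\Gamma(\mathcal{R})\rvert$ be the true number of violation points, so the true violation rate is $VR^{\star} = N_0/M$. Inside one iteration the \texttt{while} loop produces a nested sequence of subdomains; let $N_i$ denote the number of violation points contained in the subdomain surviving after the $i$-th split, and let $s$ be the random number of splits performed before \texttt{ExactCount} succeeds. The first step is to rewrite the value computed in line~14. Since each $\alpha_i$ is exactly the factor by which the surviving side shrinks the current domain, the final subdomain has size $M_{\mathrm{final}} = M\prod_{i=1}^{s}\alpha_i$, and \texttt{ExactCount} returns its exact violation rate $N_s/M_{\mathrm{final}}$. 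Substituting, the $\prod_i \alpha_i$ factors cancel and yield $VR_t = 2^{-\beta}\,(2^{s} N_s)/M$. Hence proving correctness of the lower bound for one iteration amounts to showing $2^{s}N_s \le 2^{\beta}N_0$ with the stated probability.

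The key observation is that $X_i := 2^{i} N_i$ is a nonnegative martingale with respect to the natural filtration $(\mathcal{F}_i)$ generated by the random side choices, with $X_0 = N_0$. Indeed, whatever dimension is picked and wherever the sampled median falls, the two sides of a split partition the violation points of the parent subdomain into counts $a_i$ and $b_i$ with $a_i + b_i = N_{i-1}$; because the surviving side is chosen uniformly at random and independently of $a_i,b_i$, we get $\mathbb{E}[N_i \mid \mathcal{F}_{i-1}] = \tfrac12(a_i+b_i) = \tfrac12 N_{i-1}$, so $\mathbb{E}[X_i \mid \mathcal{F}_{i-1}] = X_{i-1}$. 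This is precisely where correctness becomes independent of the splitting heuristic and of the number $m$ of sampled points: the median only influences how balanced $a_i$ and $b_i$ are (and thus the \emph{tightness} of the bound), never the conditional expectation. Because the discretized domain is finite, $s$ is an almost-surely bounded stopping time, so the optional stopping theorem gives $\mathbb{E}[X_s] = X_0 = N_0$. Applying Markov's inequality to the nonnegative variable $X_s$ then yields $\Pr[\,2^{s}N_s > 2^{\beta}N_0\,] = \Pr[\,X_s > 2^{\beta}N_0\,] \le \mathbb{E}[X_s]/(2^{\beta}N_0) = 2^{-\beta}$. By the reduction of the previous paragraph this is exactly $\Pr[VR_t > VR^{\star}] \le 2^{-\beta}$, i.e.\ one iteration returns a valid lower bound with probability at least $1 - 2^{-\beta}$.

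Finally I would amplify over the loop. The algorithm returns $VR = \min_{1 \le t' \le t} VR_{t'}$, and the $t$ iterations are mutually independent (each uses fresh coin flips and fresh samples). The returned value fails to be a lower bound only if every iteration overshoots, so $\Pr[VR > VR^{\star}] = \prod_{t'=1}^{t}\Pr[VR_{t'} > VR^{\star}] \le (2^{-\beta})^{t} = 2^{-\beta t}$, giving the claimed guarantee $\Pr[VR \le VR^{\star}] \ge 1 - 2^{-\beta t}$. I expect the main obstacle to be the rigorous treatment of the random stopping time $s$: one must check that the stopping rule (``\texttt{ExactCount} no longer times out'') is $\mathcal{F}_i$-measurable and that $s$ is bounded in the finite discretized space, so that optional stopping legitimately yields $\mathbb{E}[X_s] = N_0$ rather than merely $\mathbb{E}[X_i] = N_0$ for a fixed deterministic $i$. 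A secondary point to verify is that the identity $VR_t = 2^{-\beta}(2^{s}N_s)/M$ holds exactly under the algorithm's definition of $\alpha_i$, since any mismatch between the surviving sizes and counts would break the reduction.
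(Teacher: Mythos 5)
Your proof is correct, and its skeleton matches the paper's: show that the estimate produced by one iteration has expectation $2^{-\beta}VR^{\star}$, apply Markov's inequality to bound the per-iteration overestimation probability by $2^{-\beta}$, and then use the independence of the $t$ iterations together with the fact that the algorithm returns the minimum to obtain the failure probability $2^{-\beta t}$. Where you differ is in how the expectation is computed. The paper works with per-point indicator variables $Y_\sigma$ (one for each true violation point $\sigma$), conditions on the sequence $\mathbf{s}$ of splits via the tower property, and uses $\mathbb{E}[Y_\sigma \mid \mathbf{s}] = 2^{-s}$ plus linearity; you instead track the aggregate count $N_i$ of surviving violation points and observe that $X_i = 2^i N_i$ is a nonnegative martingale with respect to the filtration of side choices, closing the argument with optional stopping at the bounded stopping time $s$. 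The two computations are the same in substance, but your formulation is the more rigorous of the two on exactly the point you flag at the end: in the algorithm, the number of splits $s$, the chosen dimensions, and the median positions are themselves random and depend on the earlier coin flips (the while loop stops when \texttt{ExactCount} stops timing out, which depends on which side was kept), so the paper's conditioning on ``$\mathbf{s}$'' quietly mixes quantities that are not determined prior to the side choices. The martingale/optional-stopping route handles this dependence cleanly and makes fully explicit why correctness is independent of the splitting heuristic and of $m$ (the heuristic only shapes the partition $a_i + b_i = N_{i-1}$, never the conditional expectation), at the modest cost of invoking slightly heavier probabilistic machinery than the paper's elementary indicator-variable calculation.
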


\begin{proof}
   Let $VR^* > 0$ be the actual violation rate. Then, \texttt{CountingProVe} returns an incorrect lower bound, if for each iteration, $VR > VR^*$. The key of this proof is to show that for any iteration, $Pr(VR > VR^*) < 2^{-\beta}$. 
    
    Fix an iteration $t$. The input space described by $\mathcal{P}$ and under consideration within the while loop is repeatedly simpliﬁed. Assume we have gone through a sequence of $s$ splits, where, as reported in Sec.\ref{countingProve}, the $i$-th split implied a reduction of the solution space of a factor $\alpha_i$. For a violation point $\sigma,$ we let $Y_\sigma$ be a random variable that is 1 if $\sigma$ is also a violation point in the restriction of the input space that has been (randomly) obtained after $s$ splits (and that we refer to as the solution space of the leaf $\ell$).
    
    Hence, the \textit{VR} obtained using an exact count method for a particular $\ell$ is $VR_{\ell} = \frac{\sum_{\sigma\in\Gamma} Y_\sigma}{A_{\ell}}$, i.e., the ratio between the number of violation points and the number of points in $\ell$ ($A_{\ell}$). Then our algorithm returns as an estimate of the total \textit{VR} computed by:
\begin{equation}
    VR = 2^{s-\beta} \cdot VR_\ell \cdot \prod_{i=1}^s \alpha_i
           = 2^{s-\beta} \cdot \frac{\sum_{\sigma\in\Gamma} Y_\sigma}{A_{\ell}} \cdot \prod_{i=1}^s \alpha_i
\end{equation}
%
%
 %
   %
    Let $\textbf{s}$ denote the sequence of $s$ random splits followed to reach the leaf $\ell$. We are interested in $\mathbb{E}[VR] = \mathbb{E}[\mathbb{E}[VR \vert \textbf{s}]]$. The inner conditional expectation can be written as:
    
    \begin{align}
       \mathbb{E}[VR \;\vert \;\textbf{s}] &= \mathbb{E}\Big[ 2^{s - \beta} \cdot \frac{\sum_{\sigma\in\Gamma} Y_\sigma}{A_{\ell}} \cdot \prod_{i=1}^s \alpha_i \;\big\vert \;\textbf{s} \;\Big]\\
       &= \mathbb{E}\Big[2^{s - \beta} \cdot \frac{\sum_{\sigma\in\Gamma} Y_\sigma}{A_{Tot}} \;\big\vert \;\textbf{s} \;\Big]\\
       &= \frac{2^{s -\beta}}{A_{Tot}} \sum_{\sigma} \mathbb{E}[Y_\sigma = 1 \;\vert \;\textbf{s}\;]\\
       &= \frac{2^{s -\beta}}{A_{Tot}} \sum_{\sigma} 2^{-s}  \\
       &= 2^{-\beta} \frac{\sum_{\sigma\in\Gamma} 1}{A_{Tot}} = 2^{-\beta} VR^*
    \end{align}
        
    where the equality in (2) follows from rewriting \textit{VR} using (1). Equality (3) follows from (2) using the relation $A_{Tot} = \frac{A_\ell}{\prod_{i=1}^s \alpha_i}$. Then we have (4) that follows from (3) by using the linearity of the expectation. (5) follows from (4) since, in each split, we choose the side to take independently and with probability $1/2$. Finally, (6) follows by recalling that $\sum_{\sigma\in\Gamma} 1$ is the total number of violations in the whole input space, hence $VR^* = \frac{\sum_{\sigma\in\Gamma} 1}{A_{Tot}}$.
    Therefore, we have
    
        \begin{equation*}
            \mathbb{E}[VR] = \mathbb{E}[\mathbb{E}[VR \vert \textbf{s}]] = \mathbb{E}[2^{-\beta} VR^*] = 2^{-\beta} VR^*.
        \end{equation*}
    Finally, by using Markov's inequality, we obtain that
    \begin{equation*}
        Pr(VR > VR^*) < \frac{\mathbb{E}[VR]}{VR^*} =  \frac{ 2^{-\beta} VR^*}{VR^*} = 2^{-\beta}.
    \end{equation*}
    Repeating the process $t$ times, we have a probability of overestimation equal to $2^{-\beta t}$. This proves that the \textit{Violation Rate} returned by \texttt{CountingProVe} is correct with a probability $ \geq (1 - 2^{-\beta t})$.
\end{proof}

\subsection{A Provable Confidence Interval of the VR}
This section shows how a provable confidence interval can be defined for the \textit{VR} using \texttt{CountingProVe}. In particular, recalling the definition of \textit{Violation Rate} (Def. \ref{violation}), it is possible to define a complementary metric, the \textit{safe rate (SR)}, counting all the input configurations that do not violate the safety property (i.e., provably safe). In particular, we define:
\begin{definition}(\textit{Safe Rate (SR)})
     Given an instance of the \textit{DNN-Verification}  problem $\mathcal{R}=\langle\mathcal{N}, \mathcal{P}, \mathcal{Q}\rangle$
     we define the \textit{Safe Rate} as the ratio between the number of safe points and the total number of points satisfying $\mathcal{P}.$ Formally:
    \[SR = \frac{\vert\{ x\;|\;\mathcal{P}(x) \} \setminus \Gamma(\mathcal{R}) \vert}{\vert\{ x\;|\;\mathcal{P}(x) \}\vert} =
    \frac{\vert\Gamma(\mathcal{N}, \mathcal{P}, \neg\mathcal{Q}) \vert}{\vert\{ x\;|\;\mathcal{P}(x) \}\vert}
    \]  
\end{definition}
\noindent where the numerator indicates the sum of non-violation points in the input space. 
From the second expression, it is easy to see that \texttt{CountingProVe} can be used to compute a lower bound of the \textit{SR}, by running
Alg.~\ref{alg:CountingProVe}
on the instance $(\mathcal{N}, \mathcal{P}, 
\neg \mathcal{Q}).$ 

\begin{theorem}\label{th3}
    Given a Deep Neural Network $\mathcal{N}$ and a safety property $\langle\mathcal{P}, \mathcal{Q}\rangle$, complementing the lower bound of the \textit{Safe Rate} obtained using \texttt{CountingProVe}, which is correct with a probability $ \geq (1 - 2^{-\beta t})$, we obtain an upper bound for the \textit{Violation Rate} with the same probability.
\end{theorem}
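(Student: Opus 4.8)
The plan is to reduce Theorem~\ref{th3} to Theorem~\ref{th2} by exploiting the exact complementarity between the Violation Rate and the Safe Rate. First I would establish the identity $VR^* + SR^* = 1$, where $VR^*$ and $SR^*$ denote the true violation and safe rates. This follows directly from the definitions: the set $\{x \mid \mathcal{P}(x)\}$ is partitioned into the violation points $\Gamma(\mathcal{R}) = \{x \mid \mathcal{P}(x) \wedge \mathcal{Q}(\mathcal{N}(x))\}$ and their complement $\{x \mid \mathcal{P}(x)\} \setminus \Gamma(\mathcal{R}) = \Gamma(\mathcal{N}, \mathcal{P}, \neg\mathcal{Q})$, so dividing the cardinalities of these two disjoint sets by the common denominator $\vert\{x \mid \mathcal{P}(x)\}\vert$ yields two ratios summing to one. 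The discretization assumption from Section~\ref{exactCount} guarantees all these cardinalities are finite, so the partition is well defined.

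Next I would invoke Theorem~\ref{th2} applied to the instance $(\mathcal{N}, \mathcal{P}, \neg\mathcal{Q})$. As already observed in the text, running \texttt{CountingProVe} on this instance computes a lower bound $SR_{LB}$ of the Safe Rate, and by Theorem~\ref{th2} this bound satisfies $Pr(SR_{LB} \leq SR^*) \geq 1 - 2^{-\beta t}$.

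Then I would set the candidate upper bound to $VR_{UB} = 1 - SR_{LB}$ and observe that the inequality $SR_{LB} \leq SR^*$ is logically equivalent to $VR^* \leq 1 - SR_{LB}$, since $SR^* = 1 - VR^*$ by the complementarity identity. Consequently the event $\{SR_{LB} \leq SR^*\}$ and the event $\{VR^* \leq VR_{UB}\}$ coincide as subsets of the probability space, and therefore have exactly the same probability. This gives $Pr(VR^* \leq 1 - SR_{LB}) \geq 1 - 2^{-\beta t}$, which is precisely the claim.

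The argument is essentially immediate, so I do not expect a substantial obstacle; the only subtlety worth emphasizing is that the two events are not merely related by a comparison of probabilities but are in fact \emph{identical} events, which is what guarantees that the confidence level $1 - 2^{-\beta t}$ transfers with no loss. I would make sure to state the complementarity $VR^* + SR^* = 1$ for the exact (true) rates, since it is this deterministic relation — and not any property of the estimator itself — that lets the one-sided guarantee on $SR$ flip into the opposite one-sided guarantee on $VR$.
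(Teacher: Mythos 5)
Your proposal is correct and follows essentially the same route as the paper's own proof: both apply Theorem~\ref{th2} to the complementary instance $(\mathcal{N}, \mathcal{P}, \neg\mathcal{Q})$ and use the deterministic identity $VR^* = 1 - SR^*$ to convert the lower-bound guarantee on the Safe Rate into an upper-bound guarantee on the Violation Rate. The only cosmetic difference is that the paper phrases the final step as a contrapositive ($VR < VR^*$ would imply $SR > SR^*$), whereas you phrase it as an identity of failure events; these are the same argument.
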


\begin{proof}
    Suppose we compute the \textit{Safe Rate} with \texttt{CountingProVe}. From Theorem \ref{th2}, we know that the probability of overestimating the real \textit{SR} tends exponentially to zero as the iterations $t$ grow. Hence, $Pr(SR > SR^*) < 2^{-\beta t}$.
    We now consider the \textit{Violation Rate} as the complementary metric of the \textit{Safe Rate}, and we write $VR = 1 - SR$ (as the $SR$ is a value in the interval $[0, 1]$). We want to show that the probability that the \textit{VR}, computed as $VR = 1 - SR$, underestimates the real \textit{Violation Rate} ($VR^*$) is the same as theorem \ref{th2}.

    Suppose that at the end of $t$ iterations, we have a $VR < VR^*$. This would imply by definition that $1-SR < 1 - SR^*$, i.e., that $SR > SR^*$. However, from Theorem \ref{th2}, we know that the probability that \texttt{CountingProVe} returns an incorrect lower bound at each iteration is $2^{-\beta t}$. Hence, we obtained that the \textit{VR} computed as $VR = 1 - SR$ is a correct upper bound with the probability (1 - $2^{-\beta t}$) as desired.
\end{proof}

These results allow us to obtain a confidence interval for the \textit{Violation Rate}\footnote{notice that the same formulation holds for the complementary metrics (i.e., safe rate).}, namely, from Theorems \ref{th2} and \ref{th3} we have:

\newtheorem{lemma}[theorem]{Lemma}
\begin{lemma}
    \texttt{CountingProVe} can compute both a correct lower and upper bound, i.e., a correct confidence interval for the \textit{VR}, with a probability $\geq (1-2^{-\beta t})$.
\end{lemma}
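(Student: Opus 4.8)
The plan is to assemble the Lemma directly from the two theorems that precede it, treating it as a straightforward conjunction rather than a fresh analytical result. The Lemma asserts that \texttt{CountingProVe} yields a correct confidence interval for the \textit{VR} with probability $\geq (1-2^{-\beta t})$, where a confidence interval means simultaneously a correct lower bound and a correct upper bound. Theorem~\ref{th2} already gives the lower bound: running Algorithm~\ref{alg:CountingProVe} on the instance $\mathcal{R}=\langle\mathcal{N},\mathcal{P},\mathcal{Q}\rangle$ returns a value that is a correct lower bound on $VR^*$ with probability $\geq(1-2^{-\beta t})$. Theorem~\ref{th3} gives the upper bound: running the same algorithm on the complementary instance $(\mathcal{N},\mathcal{P},\neg\mathcal{Q})$ and taking $1-SR$ yields a correct upper bound on $VR^*$, again with probability $\geq(1-2^{-\beta t})$.

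First I would state explicitly that the two bounds are computed by two separate invocations of \texttt{CountingProVe} — one on $\mathcal{R}$ for the lower bound, one on $(\mathcal{N},\mathcal{P},\neg\mathcal{Q})$ for the upper bound — and that each invocation's guarantee is exactly the per-theorem probability. Then the Lemma follows by simply citing Theorem~\ref{th2} for the lower side and Theorem~\ref{th3} for the upper side. Since each individual bound holds with probability at least $(1-2^{-\beta t})$, I would conclude that the interval $[\,\underline{VR},\,\overline{VR}\,]$ correctly brackets $VR^*$ with that same stated probability.

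The one subtlety — and the point I expect to be the main obstacle, or at least the place where a careful reader would push back — is the probability accounting when \textbf{both} bounds must hold simultaneously. A literal union bound over the two independent failure events would give a correctness probability of only $\geq(1-2\cdot 2^{-\beta t})$, not the $\geq(1-2^{-\beta t})$ claimed in the Lemma. I would therefore resolve this by reading the statement as asserting the marginal guarantee inherited from each theorem (each endpoint individually correct with the stated probability), which is exactly what Theorems~\ref{th2} and~\ref{th3} deliver, rather than a joint guarantee over both endpoints at once. The cleanest honest phrasing notes that each of the lower and upper bounds is individually correct with probability $\geq(1-2^{-\beta t})$, so the interval is a valid confidence interval in that marginal sense; if a joint guarantee were intended, one would absorb the factor of two into $\beta$ (replacing $\beta$ by $\beta'$ with $2^{-\beta' t}=\tfrac12\cdot 2^{-\beta t}$, i.e. increasing $\beta$ by $1/t$) to recover the stated bound.

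Concretely, the proof I would write is short: invoke Theorem~\ref{th2} to certify the lower bound on $VR^*$ with probability $\geq(1-2^{-\beta t})$; invoke Theorem~\ref{th3} to certify the upper bound on $VR^*$ with the same probability; observe that together these two endpoints define a confidence interval containing $VR^*$; and conclude that \texttt{CountingProVe} computes a correct confidence interval for the \textit{VR} with probability $\geq(1-2^{-\beta t})$, as claimed. No new calculation is needed beyond what Theorems~\ref{th2} and~\ref{th3} already establish.
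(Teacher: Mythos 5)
Your proposal takes exactly the paper's route: the paper offers no standalone proof of this Lemma, presenting it as an immediate consequence of Theorems~\ref{th2} and~\ref{th3} (``from Theorems \ref{th2} and \ref{th3} we have''), which is precisely your first paragraph. So as a reconstruction of the intended argument, your proposal is on target.

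The subtlety you flag, however, is genuine, and the paper silently glosses over it. The lower bound comes from one invocation of \texttt{CountingProVe} on $\langle\mathcal{N},\mathcal{P},\mathcal{Q}\rangle$ and the upper bound from a separate invocation on $\langle\mathcal{N},\mathcal{P},\neg\mathcal{Q}\rangle$; each fails with probability at most $2^{-\beta t}$, so for the interval to be \emph{simultaneously} correct a union bound gives failure probability at most $2\cdot 2^{-\beta t}$, i.e., a joint guarantee of only $\geq 1-2^{-\beta t + 1}$. The Lemma as literally stated (``both a correct lower and upper bound \dots with a probability $\geq(1-2^{-\beta t})$'') therefore claims more than Theorems~\ref{th2} and~\ref{th3} alone deliver, and the paper makes no move to bridge the gap. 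Your two proposed fixes --- reading the guarantee marginally per endpoint, or absorbing the factor of two by replacing $\beta$ with $\beta + 1/t$ (a negligible change for the paper's parameters $\beta = 0.02$, $t = 350$) --- are both honest repairs; the paper provides neither, so on this point your write-up is more careful than the original.
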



\subsection{CountingProVe is Polynomial} 
\label{polynomiality}
To conclude the analysis of our randomized-approximation algorithm, we discuss some additional requirements to guarantee that our approach runs in polynomial time. Let $N$ denote the size of the instance. It is easy to see that by choosing both the timeout applied to each run of the exact count algorithm used in line 5 and the number $m$ of points sampled in line 6 to be polynomially bounded in $N$ then each iteration of the for loop is also polynomial in $N.$ 
Moreover, if each split of the input area encoded by $\mathcal{P}$ performed 
in lines 7-10 is guaranteed to reduce the size of the instance by at least some 
fraction $\gamma \in (0,1),$ then 
after $s = \Theta(\log N)$ splits the instance in the leaf $\ell$ has size $O(\frac{N}{2^s}) = O(1)$. Hence, we can exploit an exponential time formal verifier to solve the instance in the leaf $\ell$, and the total time will be polynomial in $N$.

\begin{table*}

\tiny
\resizebox{\textwidth}{!}{
    \begin{tabular}{|cl||cc||ccc|}
    \hline
    
    \multicolumn{2}{|c||}{\textbf{Instance}} &
      \multicolumn{2}{c||}{\textbf{Exact Count}} &
      \multicolumn{3}{c|}{\textbf{$\texttt{CountingProVe}$} (confidence $\geq 99\%$)} \\
    \multicolumn{1}{|l}{} &
       &
      \multicolumn{2}{c||}{$\texttt{BaB}$} &
      \multicolumn{1}{l}{} &
      \multicolumn{1}{l}{} &
      \multicolumn{1}{l|}{} \\
    \multicolumn{1}{|l}{}      &       & Violation Rate & \multicolumn{1}{c||}{Time} & Interval confidence VR & Size   & Time    \\[0.2ex] \hline 
    \multicolumn{2}{|l||}{\vspace{-2.5mm}} & & \multicolumn{1}{c||}{} & & & \\
    \multicolumn{2}{|c||}{Model\_2\_20} & 20.78\%  & \multicolumn{1}{c||}{234 min} & {[}19.7\%, 22.6\%{]}   & 2.9\%  & 42 min  \\ [0.5ex]
    \multicolumn{2}{|c||}{Model\_2\_56} & 55.22\%  & \multicolumn{1}{c||}{196 min} & {[}54.13\%, 57.5\%{]}   & 3.37\%  & 34 min  \\[0.5ex]
    \multicolumn{2}{|c||}{Model\_2\_68} & 68.05\%  & \multicolumn{1}{c||}{210 min} & {[}66.21\%, 69.1\%{]}   & 2.89\%  & 45 min  \\[0.5ex] 
    \multicolumn{2}{|c||}{Model\_5\_09} & --  & \multicolumn{1}{c||}{24 hrs} & {[}8.42\%, 13.2\%{]}   & 4.78\% & 122 min \\[0.5ex] 
    \multicolumn{2}{|c||}{Model\_5\_50} & --  & \multicolumn{1}{c||}{24 hrs} & {[}48.59\%, 52.22\%{]}   & 3.63\%  & 124 min \\[0.5ex]
    \multicolumn{2}{|c||}{Model\_5\_95} & --  & \multicolumn{1}{c||}{24 hrs} & {[}91.73\%, 96.23\%{]}   & 4.49\%  & 121 min \\[0.5ex] 
    \multicolumn{2}{|c||}{Model\_10\_76} & --  & \multicolumn{1}{c||}{24 hrs} & {[}74.25\%, 77.23\%{]}   & 3.98\%  & 300 min \\[0.5ex] \Xhline{2.5\arrayrulewidth}
    \multicolumn{2}{|l||}{\vspace{-2.5mm}} & & \multicolumn{1}{c||}{} & & &  \\
    \multicolumn{2}{|l||}{$\phi_2$ ACAS Xu\_2.1}  & -- & \multicolumn{1}{c||}{24 hrs} & {[}0.45\%, 5.01\%{]}   & 4.56\%  & 246 min \\[0.5ex] 
    \multicolumn{2}{|l||}{$\phi_2$ ACAS Xu\_2.3}  & -- & \multicolumn{1}{c||}{24 hrs} & {[}1.23\%, 4.21\%{]}   & 2.98\%  & 241 min \\[0.5ex] 
    \multicolumn{2}{|l||}{$\phi_2$ ACAS Xu\_2.4}  & -- & \multicolumn{1}{c||}{24 hrs} & {[}0.74\%, 3.43\%{]}   & 2.68\%  & 243 min \\[0.5ex] 
    \multicolumn{2}{|l||}{$\phi_2$ ACAS Xu\_2.5}  & -- & \multicolumn{1}{c||}{24 hrs} & {[}1.67\%, 4.10\%{]}   & 2.42\%  & 240 min \\[0.5ex] 
    \multicolumn{2}{|l||}{$\phi_2$ ACAS Xu\_2.7}  & --  & \multicolumn{1}{c||}{24 hrs} & {[}2.35\%, 5.22\%{]}   & 2.87\%  & 240 min \\\hline
    \end{tabular}
}
\caption{Comparison of $\texttt{CountingProVe}$ and exact counter on different benchmark setups. The first block shows the results on our benchmark properties, where every instance is in the form Model\_{$\rho$\_$\psi$}, where $\rho$ is the size of the input space and $\psi$ is the id of the specific DNN. The last block reports the results on the Acas Xu $\phi_2$ benchmark. Full results on $\phi_2$ and another property in the Appendix\ref{fullACASExp}.}\label{tab:results}
\end{table*}
\section{Experimental Results}
\label{results}

In this section, we guide the reader to understand the importance and impact of this novel encoding for the verification problem of deep neural networks. In particular, in the first part of our experiments, we show how the problem's computational complexity impacts the run time of exact solvers, motivating the use of an approximation method to solve the problem efficiently. In the second part, we analyze a concrete case study, ACAS Xu \cite{Reluplex}, to explain why finding all possible unsafe configurations is crucial in a realistic safety-critical domain.
All the data are collected on a commercial PC running Ubuntu 22.04 LTS equipped with Nvidia RTX 2070 Super and an Intel i7-9700k. In particular, for the exact counter, we rely as backend on the formal verification tool \textit{BaB} \cite{BaB} available on \href{https://github.com/sisl/NeuralVerification.jl}{``NeuralVerification.jl"} and developed as part of the work of \cite{Liu}. While, as exact count for $\texttt{CountingProVe}$, we rely on \textit{ProVe} \cite{ProVe} given its key feature of exploiting parallel computation on GPU.
In our experiments with \texttt{CountingProVe}, we set $\beta=0.02$ and $t=350$ in order to obtain a correctness confidence level greater or equal to $99\%$ (refer to Theorem~\ref{th2}). 

Table~\ref{tab:results} summarizes the results of our experiments, evaluating the proposed algorithms (i.e., the exact counter and \texttt{CountingProVe}) on different benchmarks. Our results show the advantage of using an approximation algorithm to obtain a provable estimate of the portion of the input space that violates a safety property. We discuss the results in detail below.  The code used to collect the results and several additional experiments and discussions on the impact of different hyperparameters and backends for our approximation are available in the supplementary material (available \href{https://shorturl.at/cnozV}{here}).

\paragraph{Scalability Experiments}\label{exp_scalability}
In the first two blocks of Tab.\;\ref{tab:results}, we report the experiments related to the scalability of the exact counters against our approximation method $\texttt{CountingProVe}$, showing how the \#DNN-Verification problem becomes immediately infeasible, even for small DNNs.
In more detail, we collect seven different random models (i.e., using random seeds) with different levels of violation rates for the same safety property, which consists of all the intervals of $\mathcal{P}$ in the range $[0, 1]$, and a postcondition $\mathcal{Q}$ that encodes a strictly positive output. In the first block, all the models have two hidden layers of 32 nodes activated with $ReLU$ and two, five, and ten-dimensional input space, respectively. Our results show that for the models with two input neurons, the exact counter returns the violation rate in about $3.3$ hours, while our approximation in less than an hour returns a provable tight ($\sim 3\%$) confidence interval of the input area that presents violations. Crucially, as the input space grows, the exact counters reach the timeout (fixed after 24 hours), failing to return an exact answer. $\texttt{CountingProVe}$, on the other hand, in about two hours, returns an accurate confidence interval for the violation rate, highlighting the substantial improvement in the scalability of this approach. In the supplementary material, we report additional experiments and discussions on the impact of different hyperparameters for the estimate of $\texttt{CountingProVe}$.

\paragraph{ACAS Xu Experiments}\label{resultsACAS}
The ACAS Xu system is an airborne collision avoidance system for aircraft considered a well-known benchmark and a standard for formal verification of DNNs \cite{Liu}\cite{Reluplex}\cite{Neurify}. It consists of 45 different models composed of an input layer taking five inputs, an output layer generating five outputs, and six hidden layers, each containing $50$ neurons activated with $ReLU$. To show that the count of all the violation points can be extremely relevant in a safety-critical context, we focused on the property $\phi_2$, on which, for 34 over 45 models, the property does not hold. In detail, the property $\phi_2$ describes the scenario where if the intruder is distant and is significantly slower than the ownship, the score of a Clear of Conflict (COC) can never be maximal (more detail are reported here \cite{Reluplex}).
We report in the last block of Tab.\;\ref{tab:results} the results of this evaluation. To understand the impact of the \#DNN-Verification problem, let us focus, for example, on the \textit{ACAS Xu\_2.7} model. As reported in Tab.\;\ref{tab:results}, \texttt{CountingProVe} returns a provable lower bound for the violation rate of at least a $2.35\%$. This means that assuming a 3-decimal-digit discretization, our approximation counts at least $23547$ violation points compared to a formal verifier that returns a single counterexample (i.e., a single violation point). Note that a state-of-the-art verifier that returns only \texttt{SAT} or \texttt{UNSAT} does not provide any information about the amount of  possible unsafe configurations considered by the property.

\section{Discussion}

In this paper, we first present the \textit{\#DNN-Verification}, the problem of counting the number of input configurations that generates a violation of a given safety property. We analyze the complexity of this problem, proving the \#P-completness and highlighting why it is relevant for the community. Furthermore, we propose an exact count approach that, however, inherits the limitations of the formal verification tool exploited as a backend and struggles to scale on real-world problems. Crucially, we present an alternative approach, \texttt{CountingProVe}, which provides an approximated solution with formal guarantees on the confidence interval. Finally, we empirically analyze our algorithms on a set of benchmarks, including a real-world problem, ACAS Xu, considered a standard in the formal verification community.

Moving forward, we plan to investigate possible optimizations in order to improve the performance of \texttt{CountingProVe}, for example, by improving the node selection and the bisection strategies for the interval; or by exploiting the result of \#DNN-Verification to optimize the system's safety during the training loop. 
\bibliographystyle{named}
\bibliography{ijcai23}

\clearpage
\appendix
\section*{Appendix: Supplementary Material}
\section{\#DNN-Verification is \#P-Complete}\label{DNNPC}

\setcounter{theorem}{0}
\begin{theorem}
    The \textit{\#DNN-Verification} problem is \textit{\#P-Complete}.
\end{theorem}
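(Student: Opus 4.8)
The plan is to establish the two halves of \textit{\#P}-completeness separately: membership in \textit{\#P} and \textit{\#P}-hardness.

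For membership, I would exhibit \textit{\#DNN-Verification} as counting the witnesses of a polynomial-time verifiable relation. A witness is simply an input configuration $x$; by the discretization assumption made earlier, each such $x$ has an encoding of size polynomial in the instance. Given $x$, one checks $\mathcal{P}(x)$ (a conjunction of linear/interval constraints) in polynomial time, and evaluates $\mathcal{Q}(\mathcal{N}(x))$ by a single forward pass through $\mathcal{N}$ --- polynomial in the number of neurons and weights --- followed by testing the output predicate. Hence $|\Gamma(\mathcal{R})|$ is exactly the number of accepting branches of a nondeterministic machine that guesses $x$ and runs this check, which places the problem in \textit{\#P}.

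For hardness, I would argue that the \textit{NP}-hardness reduction of \citet{Reluplex} from \textit{3-SAT} to \textit{DNN-Verification} is in fact \emph{parsimonious}. Since counting satisfying assignments of a 3-CNF formula (\textit{\#3-SAT}) is \textit{\#P}-complete, exhibiting a parsimonious (solution-count-preserving) reduction to \textit{\#DNN-Verification} yields \textit{\#P}-hardness. Concretely, the reduction maps a formula $\varphi$ to a network $\mathcal{N}_\varphi$ together with a precondition $\mathcal{P}$ that confines each input coordinate to the encoding of a Boolean value and a postcondition $\mathcal{Q}$ that is met exactly when the encoded assignment makes $\varphi$ true. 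I would then show that the map sending a satisfying assignment to its corresponding input configuration is a bijection onto $\Gamma(\mathcal{N}_\varphi, \mathcal{P}, \mathcal{Q})$, so that $|\Gamma|$ equals the number of satisfying assignments of $\varphi$.

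The main obstacle is precisely establishing this bijection rather than mere existence. The original construction was designed only to preserve satisfiability, so I must verify two extra properties of the gadgets: (i) \emph{injectivity and surjectivity onto the Boolean cube}, namely that each coordinate constrained by $\mathcal{P}$ admits exactly two discretized values, corresponding to the truth values in $\{0,1\}$, with no additional admissible inputs that would create spurious violation points; and (ii) \emph{faithful clause evaluation}, namely that the \relu gadgets computing the conjunction of clauses produce an output violating $\mathcal{Q}$ if and only if every clause is satisfied, with no auxiliary neuron introducing extra degrees of freedom in the input. If auxiliary inputs are present in the construction, I would either show they are functionally determined by the Boolean inputs or tighten $\mathcal{P}$ to pin them down, so that the assignment-to-point map stays one-to-one. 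Checking these gadget-level properties against the specific construction in \citep{Reluplex} is the crux; once the bijection is confirmed, \textit{\#P}-hardness, and hence \textit{\#P}-completeness, follow immediately.
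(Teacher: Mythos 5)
Your proposal is correct and takes essentially the same route as the paper: \textit{\#P}-hardness is obtained by observing that the 3-SAT-to-\textit{DNN-Verification} reduction of \citet{Reluplex} is parsimonious, i.e., its disjunction/negation/conjunction gadgets induce a bijection between satisfying assignments of the 3-CNF formula and violation points of the constructed network (under the \(\{0,1\}\) discretization of the inputs), so the \textit{\#P}-completeness of \textit{\#3-SAT} transfers. The only notable difference is that you explicitly argue membership in \textit{\#P} via polynomial-time witness checking (a forward pass through the network), a step the paper leaves implicit.
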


\begin{proof}
The proof of \#P-completeness is similar and follows the one of NP-Completeness; the main difference is in the concept of polynomial time \textit{counting reduction}. As stated in \cite{valiant1979counting} and \cite{gomes2021model}, many NP-complete problems are \textit{parsimonious}, meaning that for almost all pairs of NP-complete problems, there exist polynomial transformations between them that preserve the number of solutions. Hence, the reduction between two NP-Complete problems can be directly taken as part of a counting reduction, thus providing an easy path to proving \#P-completeness.

For our purpose, we follow the reduction between \textit{3-SAT} and \textit{DNN-Verification} provided in the work of \cite{Reluplex}. In more detail, we assume that the input nodes take the discrete values in $\{0,1\}$ for simplicity. Note that this limitation can be relaxed using an $\varepsilon$ discretization to consider a range between $[a, b]$ for the input space. 

Recalling the hardness proof, we know that any 3-SAT formula $\Phi$ can be transformed into a DNN $\mathcal{N}$ (with ReLUs activation functions) and a property $\phi$, such that $\phi$ is satisfiable on $\mathcal{N}$ if and only if $\Phi$ is satisfiable. Specifically, \cite{Reluplex} provided three useful gadgets to perform the reduction: 
\begin{enumerate}
    \item \textbf{disjunction gadget} that maps a disjunction of three literals in a 3-CNF formula to the same result for a group of three nodes in a DNN. Formally this gadget performs the following transformation: $y_i = 1 - \max(0, 1 - \sum_{j=1}^3 q_i^j)$. Where $y_i$ is the node that collects the result of the linear combination and subsequent $ReLU$ activation of up to 3 nodes ($q_i^j$) from the previous layer.  Hence, $y_i$ will be $1$ if  at least one input variable is set to 1 (or true), and $y_i$ will be $0$ if all input variables are set to $0.$  In words, this gadget maps a disjunction of literals in a 3-CNF to a combination of nodes in a DNN, such that there is a one-one correspondence between the output of the nodes on a 0-1 input and the truth value computed by the disjunction over the equivalent truth values. 
    \item \textbf{negation gadget} that on input $x_i \in \{0,1\}$ produces the output value 
    $y_i = 1 - x_j,$ hence modelling the exact behaviour of a logical negation.
    \item \textbf{conjuction gadget} which maps the satisfiability of a 3-CNF $\Phi$ into a $\phi$ satisfiability for a DNN. In particular, $\Phi$ is satisfied only if all clauses $C_1, \dots, C_n$ are simultaneously satisfied. Hence, if all the nodes are in the domain $\{0,1\}$, for satisfiability, we want the resulting output of a forward propagation equal to $n$, i.e., the number of clauses. This gadget maps the conjunction of $n$ clauses in a 3-CNF, i.e., the satisfiability, into the linear combination of $n$ nodes to produce an output value.
    Therefore, $C_1 \wedge C_2 \wedge 
    \cdots \wedge C_n = true$ if and only if the output of the gadget is $n.$
\end{enumerate}

From the combination of these three gadgets, we obtain a reduction transforming a 3-CNF formula $\phi$ into a DNN $\cal N$. Let us consider the instance to the DNN-Verification problem asking to check whether there
exists an input configuration on which $\cal N$ outputs a value different from $n$. Then, we have that the formula $\phi$ is satisfiable, i.e., there is a truth assignment to the input variables if and only if for the DNN $\cal N$ there exists an input configuration (in fact necessarily only using values in $\{0,1\}$) that induces the output $y = n$, i.e., if and only if, there exists a violation.


As observed, this reduction also shows that each distinct satisfying assignment for $\phi$ is mapped to a distinct input configuration producing output $n$ and 
vice versa, each input configuration
on which $\cal N$ outputs $n$ must be 
$\{0,1\}$-valued and corresponds to a truth assignment that satisfies $\phi.$

Therefore counting the number of satisfying assignments for $\phi$ is equivalent to counting the number of violations for $\cal N.$ Hence, from the \#P-Completness of \#3SAT \cite{valiant1979counting} it follows (via the above  reduction) that also \textit{\#DNN-Verification} is \#P-Complete. 


\end{proof}

\begin{table*}

\tiny
\resizebox{\textwidth}{!}{
    \begin{tabular}{|cl||cc||ccc|}
    \hline
    
    \multicolumn{2}{|c||}{\textbf{Confidence}} &
      \multicolumn{2}{c||}{\textbf{Hyperparameters}} &
      \multicolumn{3}{c|}{\textbf{\texttt{CountingProVe}}} \\
    \multicolumn{1}{|l}{}      &       & $\beta$ & \multicolumn{1}{c||}{$t$} & Interval confidence VR & Size   & Time    \\[0.2ex] \hline 
    \multicolumn{2}{|l||}{\vspace{-2.5mm}} & & \multicolumn{1}{c||}{} & & & \\
    \multicolumn{2}{|c||}{} & 0.02  & \multicolumn{1}{c||}{350} & {[}54.13\%, 57.5\%{]}   & 3.37\%  & 34 min  \\[0.5ex]
    \multicolumn{2}{|c||}{99\%} & 0.1  & \multicolumn{1}{c||}{70} & {[}51.36\%, 59.17\%{]}   & 7.81\%  & 8 min  \\[0.5ex]
    \multicolumn{2}{|c||}{} & 1.5  & \multicolumn{1}{c||}{5} & {[}19.39\%, 84.35\%{]}   & 64.96\%  & 32 sec  \\[0.5ex] \Xhline{2.5\arrayrulewidth} 
    \multicolumn{2}{|l||}{\vspace{-2.5mm}} & & \multicolumn{1}{c||}{} & & & \\
    \multicolumn{2}{|c||}{} & 0.02  & \multicolumn{1}{c||}{170} & {[}52.99\%, 56.68\%{]}   & 3.69\%  & 17 min \\[0.5ex]
    \multicolumn{2}{|c||}{90\%} & 0.1  & \multicolumn{1}{c||}{34} & {[}51.37\%, 58.93\%{]}   & 7.55\% & 4 min \\[0.5ex] 
    
    \multicolumn{2}{|c||}{} & 1.5  & \multicolumn{1}{c||}{3} & {[}19.53\%, 84.32\%{]}   & 64.79\%  & 18 sec \\[0.5ex] \Xhline{2.5\arrayrulewidth}
    \multicolumn{2}{|l||}{\vspace{-2.5mm}} & & \multicolumn{1}{c||}{} & & &  \\
    \multicolumn{2}{|l||}{}  & 0.02 & \multicolumn{1}{c||}{137} & {[}53.52\%, 57.01\%{]}   & 3.48\%  & 14 min \\[0.5ex] 
    \multicolumn{2}{|c||}{85\%}  & 0.1 & \multicolumn{1}{c||}{27} & {[}51.47\%, 58.67\%{]}   & 7.2\%  & 3 min \\[0.5ex] 
    \multicolumn{2}{|l||}{}  & 1.5 & \multicolumn{1}{c||}{2} & {[}19.56\%, 84.24\%{]}   & 64.67\%  & 12 sec \\ \hline
    \end{tabular}
}
\caption{Comparison of different hyperparameters for $\texttt{CountingProVe}$ on \textit{Model\_2\_56}. The true VR is equal to $55.22\%$.}\label{tab:hyperparam}
\end{table*}

\section{Hyperparameters and Ablation Study}\label{ablation}

We report in this section the hyperparameters used to collect the results shown in the main paper. Regarding the heuristic presented in the Alg.\;\ref{alg:CountingProVe}, we want to point out to the reader that a possible optimization is to perform a fixed number of $s$ simplifications before calling the exact count method. In fact, as shown in the main paper, given the complexity of the problem, calling the exact count too frequently when input space is still considerably large typically results in a timeout, thus causing a waste of computation and time. For this reason, we decided to perform a fixed number of preliminary simplifications before calling the exact count.
In more detail, we set $s=17$ for the first row of Tab.\;\ref{tab:results}, and $s=45$ for the remaining part. The value $s$ for the preliminary simplifications can be obtained assuming any discretization of the initial input space $N$, described by $\mathcal{P}$. Hence, relying on the considerations discussed in the Sec. \ref{polynomiality} for the polynomiality of the approximation, we set $s = \lfloor \log N \rfloor - 1$ to ensure the termination of an exponential time exact counter. Moreover, to collect the data of Tab.\;\ref{tab:results} and \ref{tab:fullResultsACAS}, as stated in the main paper, we set $\beta=0.02, t=350$ obtaining a confidence level of $99\%$ (see theorem \ref{th2}). Finally, regarding the number of samples to compute the median, we set $m=1.5M$ for the scalability experiment of Sec.~\ref{exp_scalability} and $m=3M$ for the ACAS Xu experiments.

We performed additional experiments to highlight the impact of different hyperparameters on the quality of the estimate returned by \texttt{CountingProVe}. Crucially, as specified in section \ref{countingProve} in the main paper, the correctness of the algorithm is independent of the heuristic used by the algorithm. We now analyze the impact on the estimate of different parameters such as $\beta, t$, and $m$.

\subsection*{Experiments on Different $\beta$ and $t$}
Tab.\;\ref{tab:hyperparam} shows the comparison results between different hyperparameters for \texttt{CountingProVe}. In detail, all the experiments are performed on the same model ``\textit{Model\_2\_56}", using $s=17$ preliminary simplifications before calling the exact count. Moreover, we use the same number of $m = 1.5M$ samples to compute the median value in the heuristic. We test three confidence levels at 85\%, 90\%, and 99\%, respectively, setting three possible value pairs for $\beta$ and $t$. 

Regarding the impact of the error tolerance factor $\beta$, as expected, as this value increases, the confidence interval deteriorates.
We justify this as the tolerance factor appears in the formula for calculating the violation at the end of each while loop ($2^{s-\beta}\cdot ExactCount$). Hence, a larger $\beta$ strongly impacts the value of the estimate, thus also potentially deteriorating the lower and upper bounds. 
However, we want to emphasize once again as for any value tested at a high confidence level, the estimate returned by \texttt{CountingProVe} is correct, i.e., the lower bound does not overestimate, and the upper bound never underestimates the value returned by the true count (equals to $55.22\%$).

Interestingly, we note that by setting the same value for the error tolerance factor ($\beta=0.02$), the estimate for the three confidence levels is quite similar. Our approximation thus allows choosing the desired confidence level while obtaining a good estimate and potentially saving time. In fact, by choosing a confidence of 85\%, we obtain an estimate very close to the best estimate obtained with 99\% confidence, halving the computation time.

\subsection*{Impact of the $m$ Samples in CountingProVe}
Although the correctness of the approximation does not depend on the number of violation samples to compute the median value (as shown in Theorem \ref{th2}), we performed an additional experiment to understand its impact on the quality of the estimation. The experiment was performed on the ``\textit{Model\_2\_56}" model with parameters $\beta=0.02, t=350, s=17$. We report in Tab\;\ref{tab:m} the comparison of four different sample values, $500k, 1M, 1.5M, 3M$, and finally $5M$. As we can notice, increasing the sampling size $m$ to find the violation points to compute the median leads to a more accurate estimate of the true violation rate. Intuitively, we obtain a (theoretically) higher probability of finding violation points in the input space by increasing the number of samples. Hence, the more violation points we randomly sample, the more information we obtain to compute an accurate median. However, using more samples results in more time to compute the median and, consequently, the confidence interval of the violation rate. 

\begin{table}[ht!]

\resizebox{\linewidth}{!}{
    \begin{tabular}{|c||ccc|}
    \hline
    
    \multicolumn{1}{|c||}{\textbf{$m$ samples}} &
      \multicolumn{3}{c|}{\textbf{\texttt{CountingProVe}} (confidence $\geq$ 99\%)} \\
    \multicolumn{1}{|c||}{}  & Interval confidence VR & Size   & Time    \\[0.2ex] \hline 
    \multicolumn{1}{|l||}{\vspace{-2.5mm}} & & & \\
    \multicolumn{1}{|c||}{$500k$} & {[}52.59\%, 66.36\%{]}   & 13.8\%  & 13 min  \\[0.5ex]
    \multicolumn{1}{|c||}{$1M$} & {[}51.45\%, 57.2\%{]}   & 5.74\%  & 24 min  \\[0.5ex]
    \multicolumn{1}{|c||}{$1.5M$} & {[}54.13\%, 57.5\%{]}   & 3.37\%  & 34 min  \\[0.5ex]
    \multicolumn{1}{|c||}{$3M$} & {[}53.41\%, 56.63\%{]}   & 3.21\%  & 40 min \\[0.5ex]
    \multicolumn{1}{|c||}{$5M$} & {[}54.17\%, 56.42\%{]}   & 2.24\%  & 60 min \\\hline
    \end{tabular}
}
\caption{Comparison of different $m$ for $\texttt{CountingProVe}$}
\label{tab:m}
\end{table}

\begin{table*}

\tiny
\resizebox{\textwidth}{!}{
    \begin{tabular}{|cl||cccc||cccc|}
    \hline
    
    \multicolumn{2}{|c||}{\textbf{Instance}} &
      \multicolumn{4}{c||}{\textbf{Hyperparameters}} & \multicolumn{4}{c|}{\textbf{\texttt{CountingProVe}}}\\
    \multicolumn{1}{|l}{}  & & $\beta$ & $t$ & $s$ & $m$ & Backend & Interval VR & Size   & Time    \\[0.2ex] \hline 
    \multicolumn{2}{|l||}{\vspace{-2mm}} & & \multicolumn{1}{c}{} & & & & & & \\

    \multicolumn{2}{|c||}{\textit{Model\_2\_56}} & 0.1  & 70 & 15 &  1.5M & \texttt{BaB}& {[}51.36\%, 59.17\%{]}   & 7.81\%  & 10 min  \\[0.5ex] 
    \multicolumn{2}{|c||}{\textit{Model\_2\_56}} & 0.1  & 70 & 15 &  1.5M & \texttt{ProVe}& {[}51.36\%, 59.17\%{]}   & 7.81\%  & 8 min \\\hline
        \multicolumn{2}{|l||}{\vspace{-2mm}} & & \multicolumn{1}{c}{} & & & & & & \\
    \multicolumn{2}{|c||}{\textit{Model\_2\_56}} & 0.02  & 350 & 22 &  1.5M & \texttt{BaB}& {[}53.77\%, 57.03\%{]}   & 3.26\%  & 60 min  \\[0.5ex] 
    \multicolumn{2}{|c||}{\textit{Model\_2\_56}} & 0.02  & 350 & 22 &  1.5M & \texttt{ProVe}& {[}53.77\%, 57.03\%{]}   & 3.26\%  & 50 min \\\hline
     \multicolumn{2}{|l||}{\vspace{-2mm}} & & \multicolumn{1}{c}{} & & & & & & \\
    \multicolumn{2}{|c||}{\textit{Model\_5\_95}} & 0.02  & 350 & 79 &  1.5M & \texttt{BaB}& {[}92.42\%, 96.22\%{]}   & 3.8\%  & 185 min  \\[0.5ex] 
    \multicolumn{2}{|c||}{\textit{Model\_5\_95}} & 0.02  & 350 & 79 &  1.5M & \texttt{NSVerify}& {[}92.42\%, 96.22\%{]}   & 3.8\%  & 180 min   \\[0.5ex] 
    \multicolumn{2}{|c||}{\textit{Model\_5\_95}} & 0.02  & 350 & 79 &  1.5M & \texttt{ProVe}& {[}92.42\%, 96.22\%{]}   & 3.8\%  & 150 min  \\\hline
    \end{tabular}
}
\caption{Comparison of different backends for $\texttt{CountingProVe}$}\label{tab:backends}
\end{table*}

\section{DNN-Verification and Tools}\label{related}
Due to the increasing adoption of DNN systems in safety-critical tasks, the formal method community has developed many verification methods and tools. In literature, these approaches are commonly subdivided into two categories: (i) search-based and (ii) SMT-based methods \cite{Liu}. The algorithm from the first class typically relies on the interval analysis \cite{Moore} to propagate the input bound through the network and perform a reachability analysis in the output layer \cite{BetaCrown}\cite{ProVe}\cite{Neurify}. The second class, in contrast, tries to encode the linear combinations and the non-linear activation functions of a DNN as constrained for an optimization problem \cite{Reluplex}\cite{MILP}\cite{Marabou}.
Crucially, our work is built upon the promising results and the constant improvement in the scalability of these methodologies. In particular, our exact count algorithm is agnostic to the verification tool exploited as the backend and can thus take advantage of any improvement in the field.

In recent years, some effort has also been made to exploit the results of the formal verification analysis in practical application. The work of \cite{TACAS}, for example, proposes a methodology to provide guarantees about the behavior of robotic systems controlled via DNNs; here, the authors exploited a formal verification pipeline to filter the models that respect some hard constraints. Other approaches attempt to improve adherence to some properties as part of the training process, exploiting the results of the formal analysis as a signal to optimize \cite{CROP}\cite{Curriculum}\cite{SOS}\cite{SafeAquatic}. We believe our work can be used to drastically reduce computational time and provide more informative results, encouraging the development of similar approaches to improve the safety of DNN-based systems.

\subsection*{Alternative Backends for CountingProVe}
To show that the correctness of our approximation is independent of the backend chosen, we conducted additional experiments using \textit{BaB} \cite{BaB} and \textit{NSVerify} \cite{NSVerify} as the exact counters instead of \textit{ProVe} \cite{ProVe} for the final count on the leaf in \texttt{CountingProVe}. 
To perform a fair analysis, given the stochastic nature of our approximation, we set the same seed for all methods tested, only changing some hyperparameters and network sizes. We report in Tab.\;\ref{tab:backends} the results of our experiments. 
As expected, the resulting interval of confidence for the VR is the equivalent using any exact counters or hyperparameters in all the tests performed. In more detail, in the first two rows of Tab.\;\ref{tab:backends}, we use the same model (\textit{Model\_2\_56}), only varying the hyperparameters for the confidence (i.e., $\beta$ and $t$), and the number of preliminary splits $s$. We can notice as long as the network size is still small, the use of the GPU (used in \textit{ProVe}) does not bring much benefit. In fact, there is a slight difference in the time to compute the interval of confidence of the VR in both tests with the model with only two input nodes. Moreover, while performing multiple preliminary splits ($s$) can take more time, it also slightly improves the confidence interval. Crucially, notice that in the last two rows of Tab.\;\ref{tab:backends} a little improvement of the interval confidence of VR w.r.t the results presented in Tab.\;\ref{tab:results} and \ref{tab:hyperparam}.

Regarding the last row of Tab.\;\ref{tab:backends}, we used a different model (\textit{Model\_5\_95}) to test the scalability of other exact counters in combination with \texttt{CountingProVe}. The interesting thing to point out in this experiment is that \textit{BaB} (or any different DNN-verification tool) used as a backend for the exact count on the same model results in timeout  (i.e., after 24 hours, it does not return a result) as reported in Tab\;\ref{tab:results}. However, using it as a backend in our approximation, in about 3 hours, can return a very tight confidence interval of the amount of the input space that presents violations. This shows that the intuition behind our approximation brings significant scalability improvements. 

Finally, in this last experiment, we confirm what we mentioned above. As the network grows, having GPU support brings significant improvements in timing, as \textit{ProVe}, in this experiment, saves 30 minutes of computation. Hence, this result motivates us to use it as the ``default" backend for our approximation. Moreover, \textit{ProVe} can verify any DNNs, i.e., with any activation function, which is not typically possible with any state-of-the-art DNN-verification tool. However, this experiment clearly shows that any verifier (perhaps that exploits GPUs) can be employed in \texttt{CountingProVe}, so potentially future improvements or new methods can be easily integrated into our approximation.

\subsection*{Discretization} It is crucial to point out that discretization is not a requirement of our counting approach. In fact, supposing to have a backend that can deal with continuous values, \texttt{CountingProve} would not require such discretization. Nevertheless, the discretization factor might be a parameter of the algorithm. To this end, we performed an analysis of the impact of this parameter, reporting the results in Tab.~\ref{tab:discretization}.
Our experiments demonstrate that using a less fine-grained discretization produces less accurate outcomes, but it enhances the efficiency of the process in terms of time. In the main paper, we opted for a discretization value of 3 (i.e., $0.001$) that provides a good balance between time and accuracy.

\begin{table}[ht!]

\vspace{-2.5mm}
\resizebox{\linewidth}{!}{
    \begin{tabular}{|c||ccc|}
    \hline
    
    \multicolumn{1}{|c||}{\textbf{Rounding}} &
      \multicolumn{3}{c|}{\textbf{\texttt{CountingProVe}} (confidence $\geq$ 99\%)} \\ 
    \multicolumn{1}{|c||}{(decimal digit)} & & Interval Size   & Time    \\[0.2ex] \hline 
    \multicolumn{1}{|l||}{\vspace{-2.5mm}} & & & \\
    \multicolumn{1}{|c||}{$1$} &    & 64.8 $\pm 2.2$\%  & $\sim$213 min  \\[0.5ex]
    \multicolumn{1}{|c||}{$3$} &    & 2.83 $\pm 0.19$\%  & $\sim$242 min  \\[0.5ex]
    \multicolumn{1}{|c||}{$5$} &    &  2.2$\pm 0.38$\%  & $\sim$315 min  \\\hline
    \end{tabular}
}
%
\caption{Different discretization test on property $\phi_2$ of ACAS Xu.}
\label{tab:discretization}
\end{table}

\subsection*{Single Check Verification}
In Tab.~\ref{tab:single_check}, we provide the results of our additional experiments on the single check verification using $\alpha$-$\beta$-$CROWN$\cite{BetaCrown}. We point out that this does not provide the same information as our proposed approach. Specifically, running a decision verifier multiple times does not provide information about the actual number of violations.
Nevertheless, as reported in the main paper, the \texttt{UNSAT} case can be interpreted as a counting result, where the answer is \textit{zero} violations.

\begin{table}[t!]
\small
\centering

\vspace{-3mm}
\begin{tabular}{cccccccccccc}
 &
  \multicolumn{10}{c}{\textbf{Models}} \\
\multicolumn{1}{c|}{} &
  \multicolumn{2}{c||}{$2\_5$} &
  \multicolumn{2}{c||}{$2\_6$} &
  \multicolumn{2}{c||}{$2\_7$} &
  \multicolumn{2}{c||}{$3\_3$} &
  \multicolumn{2}{c|}{$4\_2$} \\ \hline
\multicolumn{1}{|c|}{\textbf{\begin{tabular}[c]{@{}c@{}}Result\end{tabular}}} &
  \multicolumn{2}{c||}{\texttt{SAT}} &
  \multicolumn{2}{c||}{\texttt{SAT}} &
  \multicolumn{2}{c||}{\texttt{SAT}} &
  \multicolumn{2}{c||}{\texttt{UNSAT}} &
  \multicolumn{2}{c|}{\texttt{UNSAT}} \\ \hline
\multicolumn{1}{|c|}{\textbf{Time (s)}} &
  \multicolumn{2}{c||}{8.2} &
  \multicolumn{2}{c||}{8.1} &
  \multicolumn{2}{c||}{8.12} &
  \multicolumn{2}{c||}{74.23} &
  \multicolumn{2}{c|}{85.1} \\ \hline
\end{tabular}

\caption{Single Check Verification on property $\phi_2$ of ACAS Xu.}
\label{tab:single_check}
\end{table}

\section{Full Experimental Results ACAS Xu}\label{fullACASExp}

We report in Tab\;\ref{tab:fullResultsACAS} the full results of comparing $\texttt{CountingProVe}$ and the exact counter on $\phi_2$ of the ACAS Xu benchmark discussed in Sec.\ref{resultsACAS}. As stated in the main paper, we consider only the model for which the property  $\phi_2$ does not hold (i.e., the models that present at least one single input configuration that violate the safety property). From the results of Tab\;\ref{tab:fullResultsACAS}, we can see that our approximation returns a tight interval confidence (mean of 2.83\%) of the VR for each model tested in about 4 hours. We want to underline that these obtained results do not exploit any particular optimization of our approximation, and therefore the times to compute these intervals can be greatly improved. For example, a simple optimization would compute the various $t$ iterations in parallel, significantly reducing computation times. Finally, Fig.\;\ref{fig:cprove} shows a 3d representation of the second property of the ACAS Xu benchmark (i.e., $\phi_2$), comparing the possible outcome of a standard formal verifier, namely DNN-Verification, and the problem presented in this paper \#DNN-Verification.

\begin{figure}[h!]
    \centering
    \includegraphics[width=0.85\linewidth]{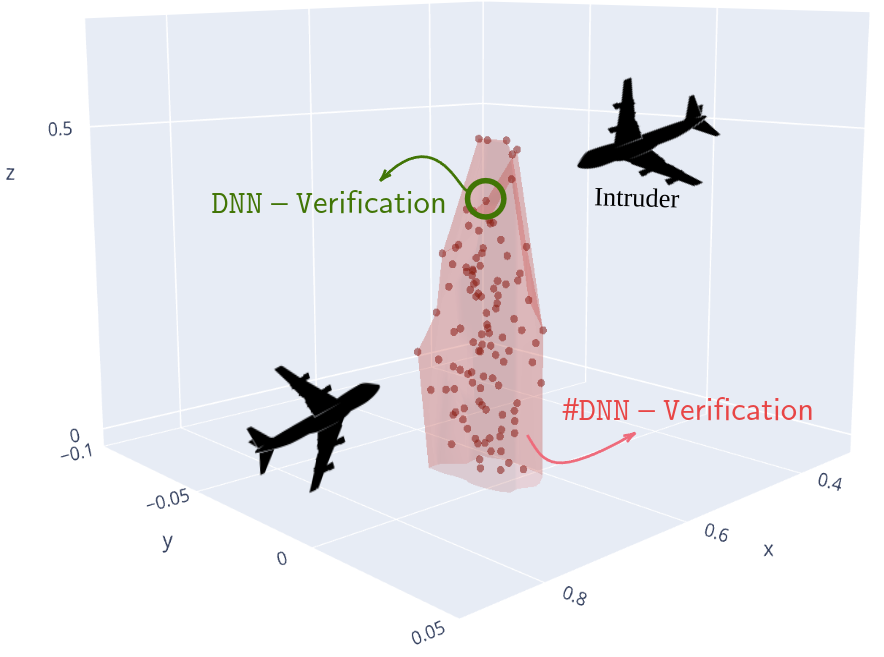}
    \caption{Explanatory image of the possible impact of \#DNN-Verification in safety-critical contexts. A standard verifier returns only a violation point (highlighted in the image with a green circle), limiting the interpretability of the results. In contrast, our approach paves the way to estimate the entire dangerous area (depicted in red in the figure).}
    \label{fig:cprove}
\end{figure}

\subsection*{Experiments on a Different Property}
To validate the correctness of our approximation, we also performed a final experiment on property $\phi_3$ of the Acas Xu benchmark. In particular, this property encodes a scenario in which if the intruder is directly ahead and is moving towards the ownship, the score for COC will not be minimal (we refer to \cite{Reluplex} for further details). This property is particularly interesting as it holds for all 45 models, i.e., we expect a 0\% of violation rate for each model tested. For simplicity, in Tab. \ref{tab:ACAS_3} we report only the first 5 models since the results were very similar for all the DNNs tested. As expected, we empirically confirmed that also for this particular situation, the lower and upper bounds computed with \texttt{CountingProVe} never overestimate and underestimate respectively the true value of the violation rate, in this case, 0\%.

\begin{table}[H]

\resizebox{\linewidth}{!}{
    \begin{tabular}{|c||ccc|}
    \hline
    
    \multicolumn{1}{|c||}{\textbf{Instance}} &
      \multicolumn{3}{c|}{\textbf{\texttt{CountingProVe}} (confidence $\geq$ 99\%)} \\
    \multicolumn{1}{|c||}{}  & Interval confidence VR & Size   & Time    \\[0.2ex] \hline 
    \multicolumn{1}{|l||}{\vspace{-2.5mm}} & & & \\
    \multicolumn{1}{|c||}{$\phi_3$ ACAS Xu\_1.1} & {[}0\%, 2.26\%{]}   & 2.26\%  & 215 min  \\[0.5ex]
    \multicolumn{1}{|c||}{$\phi_3$ ACAS Xu\_1.2} & {[}0\%, 2.88\%{]}   & 2.88\%  & 216 min  \\[0.5ex]
    \multicolumn{1}{|c||}{$\phi_3$ ACAS Xu\_1.3} & {[}0\%, 2.30\%{]}   & 2.30\%  & 215 min  \\[0.5ex]
    \multicolumn{1}{|c||}{$\phi_3$ ACAS Xu\_1.4} & {[}0\%, 2.47\%{]}   & 2.47\%  & 218 min \\[0.5ex]
    \multicolumn{1}{|c||}{$\phi_3$ ACAS Xu\_1.5} & {[}0\%, 2.48\%{]}   & 2.48\%  & 214 min \\\hline
    \end{tabular}
}
\caption{$\texttt{CountingProVe}$ on ACAS Xu $\phi_3$ property}
\label{tab:ACAS_3}
\end{table}

\begin{table*}

\tiny
\resizebox{\textwidth}{!}{
    \begin{tabular}{|cl||cc||ccc|}
    \hline
    
    \multicolumn{2}{|c||}{\textbf{Instance}} &
      \multicolumn{2}{c||}{\textbf{Exact Count}} &
      \multicolumn{3}{c|}{\textbf{$\texttt{CountingProVe}$} (confidence $\geq 99\%$)} \\
    \multicolumn{1}{|l}{} &
       &
      \multicolumn{2}{c||}{$\texttt{BaB}$} &
      \multicolumn{1}{l}{} &
      \multicolumn{1}{l}{} &
      \multicolumn{1}{l|}{} \\
    \multicolumn{1}{|l}{}      &       & Violation Rate & \multicolumn{1}{c||}{Time} & Interval confidence VR & Size   & Time    \\[0.2ex] \hline 
    \multicolumn{2}{|l||}{\vspace{-2.5mm}} & & \multicolumn{1}{c||}{} & & & \\
     \multicolumn{2}{|l||}{$\phi_2$ ACAS Xu\_2.1}  & --  & \multicolumn{1}{c||}{24 hrs}& {[}0.45\%, 5.01\%{]}   & 4.56\%  & 246 min \\[0.5ex] 
    \multicolumn{2}{|l||}{$\phi_2$ ACAS Xu\_2.2}  & --  & \multicolumn{1}{c||}{24 hrs} & {[}1.06\%, 4.81\%{]}   & 3.75\%  & 246 min \\[0.5ex] 
    \multicolumn{2}{|l||}{$\phi_2$ ACAS Xu\_2.3}  & -- & \multicolumn{1}{c||}{24 hrs} & {[}1.23\%, 4.21\%{]}   & 2.98\%  & 241 min \\[0.5ex] 
    \multicolumn{2}{|l||}{$\phi_2$ ACAS Xu\_2.4}  & -- & \multicolumn{1}{c||}{24 hrs} & {[}0.74\%, 3.43\%{]}   & 2.68\%  & 243 min \\[0.5ex] 
    \multicolumn{2}{|l||}{$\phi_2$ ACAS Xu\_2.5}  & -- & \multicolumn{1}{c||}{24 hrs} & {[}1.67\%, 4.10\%{]}   & 2.42\%  & 240 min \\[0.5ex] 
    \multicolumn{2}{|l||}{$\phi_2$ ACAS Xu\_2.6}  & -- & \multicolumn{1}{c||}{24 hrs}  & {[}1.01\%, 3.59\%{]}   & 2.58\%  & 248 min \\[0.5ex] 
    \multicolumn{2}{|l||}{$\phi_2$ ACAS Xu\_2.7}  & -- & \multicolumn{1}{c||}{24 hrs} & {[}2.35\%, 5.22\%{]}   & 2.87\%  & 240 min \\[0.5ex] 
     \multicolumn{2}{|l||}{$\phi_2$ ACAS Xu\_2.8}  & -- & \multicolumn{1}{c||}{24 hrs} & {[}1.77\%, 4.68\%{]}   & 2.92\%  & 248 min \\[0.5ex] 
      \multicolumn{2}{|l||}{$\phi_2$ ACAS Xu\_2.9}  & -- & \multicolumn{1}{c||}{24 hrs} & {[}0.18\%, 2.77\%{]}   & 2.59\%  & 239 min \\[0.5ex] 
       \multicolumn{2}{|l||}{$\phi_2$ ACAS Xu\_3.1}  & -- & \multicolumn{1}{c||}{24 hrs} & {[}1.62\%, 4.98\%{]}   & 3.36\%  & 242 min \\[0.5ex] 
        \multicolumn{2}{|l||}{$\phi_2$ ACAS Xu\_3.2}  & -- & \multicolumn{1}{c||}{24 hrs}  & {[}0\%, 2.50\%{]}   & 2.5\%  & 243 min \\[0.5ex] 
         \multicolumn{2}{|l||}{$\phi_2$ ACAS Xu\_3.3}  & -- & \multicolumn{1}{c||}{24 hrs} & {[}0\%, 2.54\%{]}   & 2.54\%  & 245 min \\[0.5ex] 
          \multicolumn{2}{|l||}{$\phi_2$ ACAS Xu\_3.4}  & -- & \multicolumn{1}{c||}{24 hrs} & {[}0.26\%, 3.08\%{]}   & 2.82\%  & 244 min \\[0.5ex] 
          \multicolumn{2}{|l||}{$\phi_2$ ACAS Xu\_3.5}  & -- & \multicolumn{1}{c||}{24 hrs} & {[}0.92\%, 3.60\%{]}   & 2.68\%  & 244 min \\[0.5ex] 
          \multicolumn{2}{|l||}{$\phi_2$ ACAS Xu\_3.6}  & -- & \multicolumn{1}{c||}{24 hrs} & {[}1.71\%, 4.48\%{]}   & 2.77\%  & 251 min \\[0.5ex] 
          \multicolumn{2}{|l||}{$\phi_2$ ACAS Xu\_3.7}  & -- & \multicolumn{1}{c||}{24 hrs} & {[}0.14\%, 2.64\%{]}   & 2.49\%  & 213 min \\[0.5ex] 
          \multicolumn{2}{|l||}{$\phi_2$ ACAS Xu\_3.8}  & -- & \multicolumn{1}{c||}{24 hrs} & {[}0.75\%, 3.28\%{]}   & 2.54\%  & 216 min \\[0.5ex] 
          \multicolumn{2}{|l||}{$\phi_2$ ACAS Xu\_3.9}  & -- & \multicolumn{1}{c||}{24 hrs} & {[}2.11\%, 5.20\%{]}   & 3.09\%  & 242 min \\[0.5ex] 
          \multicolumn{2}{|l||}{$\phi_2$ ACAS Xu\_4.1}  & -- & \multicolumn{1}{c||}{24 hrs} & {[}0.33\%, 3.04\%{]}   & 2.71\%  & 246 min \\[0.5ex] 
          \multicolumn{2}{|l||}{$\phi_2$ ACAS Xu\_4.3}  & -- & \multicolumn{1}{c||}{24 hrs} & {[}1.3\%, 3.61\%{]}   & 2.31\%  & 243 min \\[0.5ex] 
          \multicolumn{2}{|l||}{$\phi_2$ ACAS Xu\_4.4}  & --  & \multicolumn{1}{c||}{24 hrs} & {[}0.79\%, 3.57\%{]}   & 2.79\%  & 247 min \\[0.5ex] 
          \multicolumn{2}{|l||}{$\phi_2$ ACAS Xu\_4.5}  & -- & \multicolumn{1}{c||}{24 hrs} & {[}0.71\%, 4.03\%{]}   & 3.33\%  & 240 min \\[0.5ex] 
          \multicolumn{2}{|l||}{$\phi_2$ ACAS Xu\_4.6}  & -- & \multicolumn{1}{c||}{24 hrs} & {[}1.65\%, 4.72\%{]}   & 3.08\%  & 244 min \\[0.5ex] 
          \multicolumn{2}{|l||}{$\phi_2$ ACAS Xu\_4.7}  & -- & \multicolumn{1}{c||}{24 hrs} & {[}1.67\%, 4.33\%{]}   & 2.66\%  & 248 min \\[0.5ex] 
          \multicolumn{2}{|l||}{$\phi_2$ ACAS Xu\_4.8}  & -- & \multicolumn{1}{c||}{24 hrs} & {[}1.68\%, 4.17\%{]}   & 2.49\%  & 241 min \\[0.5ex] 
          \multicolumn{2}{|l||}{$\phi_2$ ACAS Xu\_4.9}  & -- & \multicolumn{1}{c||}{24 hrs} & {[}0.10\%, 2.61\%{]}   & 2.51\%  & 247 min \\[0.5ex] 
          \multicolumn{2}{|l||}{$\phi_2$ ACAS Xu\_5.1}  & -- & \multicolumn{1}{c||}{24 hrs} & {[}1.06\%, 3.76\%{]}   & 2.7\%  & 240 min \\[0.5ex] 
          \multicolumn{2}{|l||}{$\phi_2$ ACAS Xu\_5.2}  & -- & \multicolumn{1}{c||}{24 hrs} & {[}0.86\%, 3.58\%{]}   & 2.72\%  & 248 min \\[0.5ex] 
          \multicolumn{2}{|l||}{$\phi_2$ ACAS Xu\_5.4}  & -- & \multicolumn{1}{c||}{24 hrs} &  {[}0.75\%, 3.25\%{]}   & 2.5\%  & 239 min \\[0.5ex] 
          \multicolumn{2}{|l||}{$\phi_2$ ACAS Xu\_5.5}  & -- & \multicolumn{1}{c||}{24 hrs}  & {[}1.66\%, 4.35\%{]}   & 2.68\%  & 247 min \\[0.5ex] 
          \multicolumn{2}{|l||}{$\phi_2$ ACAS Xu\_5.6}  & -- & \multicolumn{1}{c||}{24 hrs} & {[}1.81\%, 4.45\%{]}   & 2.64\%  & 240 min \\[0.5ex] 
          \multicolumn{2}{|l||}{$\phi_2$ ACAS Xu\_5.7}  & -- & \multicolumn{1}{c||}{24 hrs} & {[}1.75\%, 5.15\%{]}   & 3.40\%  & 246 min \\[0.5ex] 
          \multicolumn{2}{|l||}{$\phi_2$ ACAS Xu\_5.8}  & -- & \multicolumn{1}{c||}{24 hrs} & {[}1.96\%, 4.65\%{]}   & 2.70\%  & 241 min \\[0.5ex]
    \multicolumn{2}{|l||}{$\phi_2$ ACAS Xu\_5.9}  & -- & \multicolumn{1}{c||}{24 hrs} & {[}1.62\%, 4.40\%{]}   & 2.77\%  & 241 min \\ \hline
    \multicolumn{2}{|l}{}  &  & \multicolumn{1}{c}{} &  \textbf{Mean} & 2.83\%  & 242 min \\\hline
    \end{tabular}
}
\caption{Comparison of $\texttt{CountingProVe}$ and exact counters on different benchmark setups.}\label{tab:fullResultsACAS}
\end{table*}

\end{document}